\pgfplotsset{
	compat=1.4,
	small,
	legend style={
		at={(0.99,0.99)},
		anchor=north east,
		font=\bfseries,
	},
	label style={font=\sffamily\small\bfseries}
}
\let\UnderScore_
\newcommand\redout{\bgroup\markoverwith
	{\textcolor{red}{\rule[.45ex]{1.5pt}{1.pt}}}\ULon}
\global\mdfdefinestyle{default}{%
	usetwoside=false,%
	linecolor=black,linewidth=1pt,%
	innerleftmargin=5pt,innerrightmargin=5,%
	everyline=true
}
\newcommand{\tabincell}[2]{\begin{tabular}{@{}#1@{}}#2\end{tabular}}
\theoremstyle{definition}
\newtheorem{definition}{Definition}[section]
\newtheorem{theorem}{Theorem}[section]
\newenvironment{btHighlight}[1][]
{\begingroup\tikzset{bt@Highlight@par/.style={#1}}\begin{lrbox}{\@tempboxa}}
	{\end{lrbox}\bt@HL@box[bt@Highlight@par]{\@tempboxa}\endgroup}
\newcommand\btHL[1][]{%
	\begin{btHighlight}[#1]\bgroup\aftergroup\bt@HL@endenv%
	}
	\def\bt@HL@endenv{%
	\end{btHighlight}%
	\egroup
}
\newcommand{\bt@HL@box}[2][]{%
	\tikz[#1]{%
		\pgfpathrectangle{\pgfpoint{1pt}{0pt}}{\pgfpoint{\wd #2}{\ht #2}}%
		\pgfusepath{use as bounding box}%
		\node[anchor=base west, fill=orange!25,outer sep=.5pt,inner xsep=0.5pt, inner ysep=0.15pt, rounded corners=1pt, minimum height=\ht\strutbox-.1pt,#1]{\raisebox{.01pt}{\strut}\strut\usebox{#2}};
	}%
}
\definecolor{add-line}{rgb}{0.84, 0.89, 0.97}
\definecolor{add-word}{rgb}{0.5, 0.7, 0.98}
\definecolor{ForestGreen}{RGB}{63,147,88}
\definecolor{remove-line}{rgb}{1.0, 0.93, 0.73}
\definecolor{remove-word}{rgb}{1.0, 0.8, 0.2}
\definecolor{Gray}{gray}{0.9}
\lstdefinestyle{mystyle}{  
	frame=single, 
	framexleftmargin=0pt,
	commentstyle=\color{ForestGreen},
	keywordstyle=\color{blue}\bfseries,
	numberstyle=\tiny\color{gray},
	stringstyle=\color{purple},
	basicstyle=\tiny\ttfamily\bfseries,
	breakatwhitespace=false,         
	breaklines=false,                 
	captionpos=b,                    
	keepspaces=true,     
	numbers=none,                    
	numbersep=4pt,               
	showspaces=false,                
	showstringspaces=false,
	showtabs=false,                  
	tabsize=2,  
	language=Java,  
	escapechar=\%,  
	moredelim=**[is][{\btHL[fill=red!40]}]{@}{@},
	moredelim=**[is][{\btHL[fill=remove-line]}]{|*}{*|},
	moredelim=**[is][{\btHL[fill=remove-word]}]{|^}{^|},
	moredelim=**[is][{\btHL[fill=add-line]}]{||^}{^||},	
	moredelim=**[is][{\btHL[fill=add-word]}]{||*}{*||},	
}
\renewcommand*\Call[2]{\textproc{#1}(#2)}
\algnewcommand{\LineComment}[1]{\Statex #1}
\algrenewcommand\algorithmicindent{1.0em}
\newcommand*\circled[1]{\tikz[baseline=(char.base)]{
            \node[shape=circle,draw,inner sep=0.1pt] (char) {#1};}}
\definecolor{Gray}{gray}{0.85}
\tikzset{draw half paths/.style 2 args={%
  decoration={show path construction,
    lineto code={
      \draw [#1] (\tikzinputsegmentfirst) -- 
         ($(\tikzinputsegmentfirst)!0.5!(\tikzinputsegmentlast)$);
      \draw [#2] ($(\tikzinputsegmentfirst)!0.5!(\tikzinputsegmentlast)$)
        -- (\tikzinputsegmentlast);
    }
  }, decorate
}}
\newcommand{\textmtte}[1]{{\fontsize{9}{11}\fontfamily{txtt}\selectfont #1}}
\renewcommand{\texttt}[1]{\textmtte{#1}}
\newcommand{\etal}{\hbox{et al.}\xspace}
\newcommand{\eg}{\hbox{\emph{e.g.}}\xspace}
\newcommand{\ie}{\hbox{\emph{i.e.}}\xspace}
\newcommand{\wrt}{\hbox{\emph{w.r.t.}}\xspace}
\newcommand{\etc}{\hbox{\emph{etc.}}\xspace}
\newcommand{\resp}{\hbox{\emph{resp.}}\xspace}
\newcommand{\tool}{\textsf{HuoYan}\xspace}
\newcommand{\method}{\textsf{WheaCha}\xspace}
\newcommand{\whet}{\textit{wheat}\xspace}
\newcommand{\Whet}{\textit{Wheat}\xspace}
\newcommand{\chaf}{\textit{chaff}\xspace}
\newcommand{\motimodel}{seq-GNN\xspace}
\begin{document}
\title[Short Title]{\method: A Method for Explaining the Predictions of Models of Code}

\author{Yu Wang}
\affiliation{
	\department{State Key Laboratory for Novel Software Technology Department of Computer Science and Technology}              
	\institution{Nanjing University}            
	\city{Nanjing}
	\state{Jiangsu}
	\postcode{210023}
	\country{China}                    
}
\email{yuwang\_cs@smail.nju.edu.cn} 

\author{Ke Wang}
\affiliation{
	\institution{Visa Research}            
	\city{Palo Alto}
	\state{CA}
	\country{USA}    
}
\email{kewang@visa.com}          

\author{Linzhang Wang}
\affiliation{
	\department{State Key Laboratory for Novel Software Technology Department of Computer Science and Technology}              
	\institution{Nanjing University}            
	\city{Nanjing}
	\state{Jiangsu}
	\postcode{210023}
	\country{China}                    
}
\email{lzwang@nju.edu.cn}          

\begin{abstract}


Attribution methods have emerged as a popular approach to interpreting model predictions based on the relevance of input features. Although the feature importance ranking can provide insights of how models arrive at a prediction from a raw input, they do not give a clear-cut definition of the key features models use for the prediction. 
In this paper, we present a new method, called \method, for explaining the predictions of code models. 
Although \method employs the same mechanism of tracing model predictions back to the input features, it differs from all existing attribution methods in crucial ways. Specifically, \method divides an input program into ``\whet'' (\ie, the defining features that are the reason for which models predict the label that they predict) and the rest ``\chaf''
for any prediction of a learned code model.
We realize \method in a tool, \tool, and use it to explain four prominent code models: code2vec, 
\motimodel, GGNN, and CodeBERT. Results show (1) \tool is efficient --- taking on average under twenty seconds to compute the \whet for an input program in an end-to-end fashion (\ie, including model prediction time); (2) the \whet that all models use to predict input programs is made of simple syntactic or even lexical properties (\ie, identifier names);
(3) Based on \whet, we present a novel approach to explaining the predictions of code models through the lens of training data. 


\end{abstract}

\maketitle
	
\section{Introduction}

Riding on the major breakthroughs in deep learning methods along with the ever-increasing public datasets and computation power, modern machine learning (ML) models, such as neural networks, have been increasingly applied to solve programming language tasks, and achieved remarkable success in a variety of problem domains: method name prediction~\cite{Alon2019code2vec, alon2018code2seq,fernandes2018structured}, program repair~\cite{chen2019sequencer,dinella2019hoppity}, and program verification~\cite{Jianan,Xujie3}.
Despite those accomplishments, neural networks mostly operate in a black-box manner, making it difficult to get insight into their internal mechanism of work. This lack of transparency has become an impediment to the use of learning-based program analysis tools, especially in security-critical settings (\eg, malware detection) as the degree to which their predictions can be trusted is rather unclear.
From a scientific standpoint, improving the transparency of neural models is also essential to the soundness of science.
Because their inability to provide explanations for their decisions not only weakens the validity but also hinders the openness of 
scientific discovery.

\vspace*{3pt}
\noindent
\textbf{\textit{A Review of Attribution Methods.}}\,
In the past few years, significant progress has been made in explaining the predictions of ML models. A prominent class of explainability techniques, called attribution methods, has sparked a lot of interest in the ML community. The idea is to assign an attribution score to each input feature \wrt a particular output of a network. In general, attribution methods can be classified into two categories: perturbation-based and backpropagation-based. The former refers to those that make perturbations to input features and observe the impact on later neurons in the network. \citet{zeiler2011adaptive} is a typical example in the image classification domain. In particular, it occludes different segments of an input image and visualizes the change in the activation of later layers. The strength of perturbation-based methods lies in their visibility, that is, one can directly visualize the marginal effect of any input feature via perturbation. Backpropagation-based methods exceed the perturbation-based in terms of efficiency. In particular, they can compute the attributions for all input features in a single forward and backward pass through the network.
As an early attempt, \citet{Simonyan14deepinside} proposed using the gradient of the output \wrt pixels of an input image to compute a saliency map of the image. A key drawback of this approach is its inability to address the saturation problem, namely, gradients can be tiny at certain inputs that may yield significant activations within the network~\cite{pmlrv70shrikumar17a,pmlrv9glorot10a}. 
Integrated Gradients \cite{Sundararajan2017}, a well-known explainability technique, offers a solution. Instead of computing the gradients at only the current value of the input,~\citet{Sundararajan2017} propose to integrate the gradients as the inputs are scaled up from a pre-set starting value to their current value. We defer a detailed survey on the attribution methods to Section~\ref{sec:rel}.

\begin{figure}[t!]
	\captionsetup{skip=5pt}

	\centering
	\begin{subfigure}{0.372\textwidth}
		\lstset{style=mystyle}
\lstinputlisting[basicstyle=\scriptsize\ttfamily\bfseries,  morekeywords={var, public, String, Object}]{code/newAddItem-ori.cs}  
	\end{subfigure}   
	\,\,
	\begin{subfigure}{0.1815\textwidth}
		\centering
		\raisebox{1.2mm}{\includegraphics[width=\textwidth]{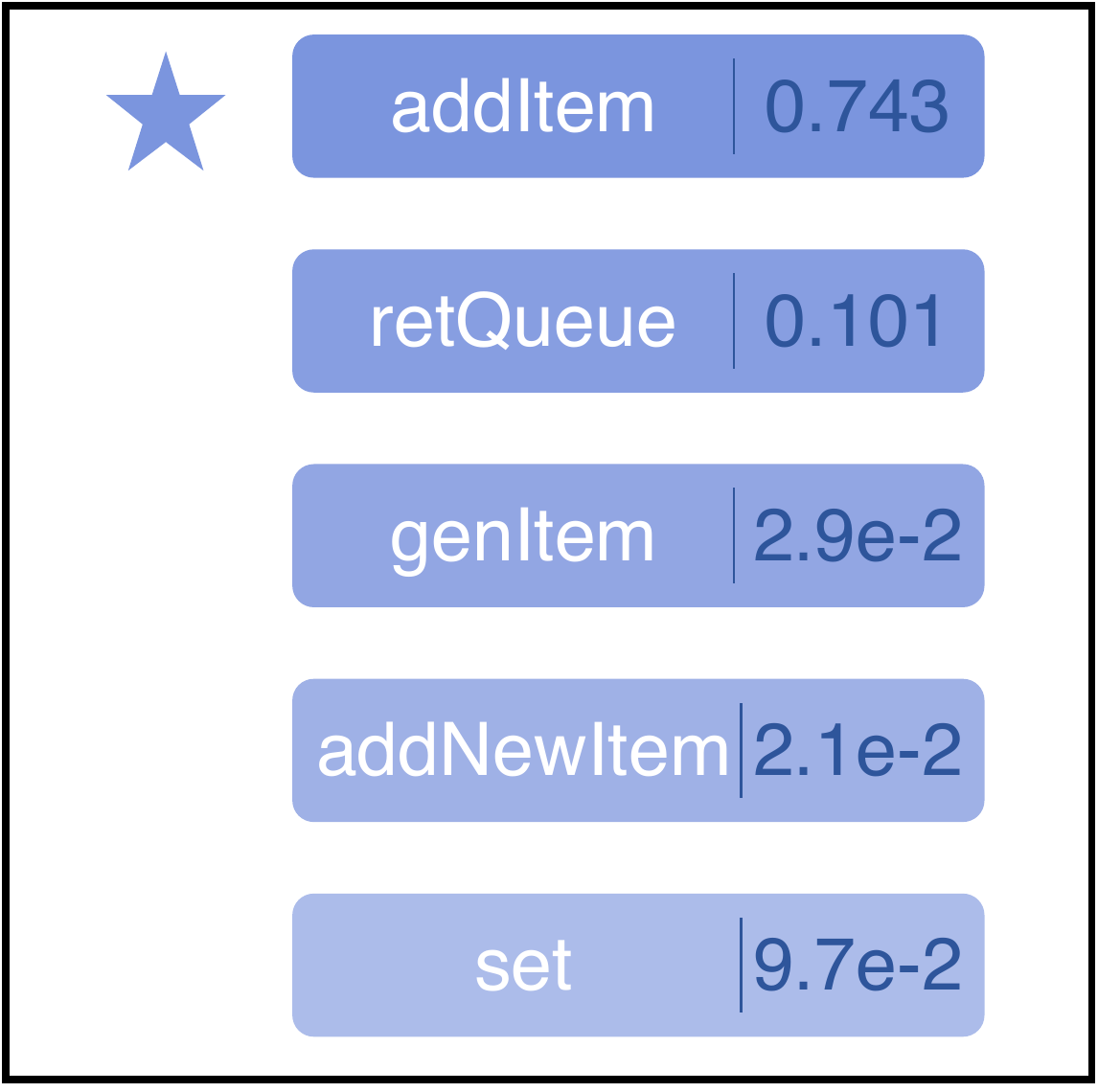}}
	\end{subfigure}
\caption{An example method whose name is correctly predicted by \motimodel~\cite{fernandes2018structured}. The top-five predictions made by \motimodel are shown on the right. Throughout the paper, we will use this program as our running example.}
	\label{fig:met}
\vspace*{-5pt}
\end{figure}

\vspace*{3pt}
\noindent
\textbf{\textit{Insights of \method.}}\, 
While the attribution methods can be readily applied to explain the predictions of learning-based program analysis tools, such as producing a ranking on the importance of different parts of the input program,
they do not give a clear-cut definition of the critical features that models use for that prediction. Considering the ideal case in which there might exist a ``sweet-spot'' in the ranking that separates the critical features from the rest, then how to determine the ``sweet-spot'' is rather unclear. To address this shortcoming, this paper presents a new explanation method, \method (\textit{\underline{Whea}t} and \textit{\underline{Cha}ff}), for interpreting the predictions of models of code, a broad class of models that predict the properties of source code. Figure~\ref{fig:met} shows one such task called method name prediction. We will use the program as our running example throughout the paper. Unlike the existing attribution methods that assign a relevance score to each input feature, \method classifies an entire input into two kinds of features: the defining features, \whet, that are the reason models predict the label that they predict, and the remaining features, \chaf. 
A natural question arises: how do we define \whet for an input program given a particular model prediction? Our insight is to observe how models react to a pair of complementary prediction samples derived from the original input. Technically, we formulate the following two constraints to quantify the influence of \whet. That is, the very same model must (1) preserve its prediction when the original input becomes \whet; and (2) change its prediction when the original input becomes \chaf. 

Below we illustrate these two constraints with our running example. Figure~\ref{fig:exasuf} shows that the expression, \texttt{mItems.add();}, alone preserves the prediction, \texttt{addItem}, \motimodel makes for the original input, thus the first constraint is satisfied; Figure~\ref{fig:exanec} shows that removing \texttt{mItems.add();} from the input program changes \motimodel's prediction to \texttt{retQueue}, therefore the second constraint is also met. Compared to the existing attribution methods, \method's explanations allow end users to know precisely and definitively the features that models use for a prediction they make, and in turn evaluate the trustworthiness of the prediction more accurately. Considering the running example, \method lets end users know exactly \texttt{mItems.add();} are the features \motimodel uses to predict the label \texttt{addItem}, whereas, a ranking of the attribution score for each feature would have been vague (\ie, unclear how small an attribution score indicates feature irrelevance) and redundant (\ie, unnecessary to compare the attribution scores of features in the \whet, such as \texttt{mItems} \textit{vs.} \texttt{add}, or those in \chaf such as \texttt{retQueue} \textit{vs.} \texttt{log}).



\begin{figure}[tb!]
	\centering
	\captionsetup{skip=5pt}
	\begin{subfigure}{0.41\textwidth}
	\centering
		\begin{subfigure}{0.45\textwidth}
		\lstset{style=mystyle}\lstinputlisting[basicstyle=\scriptsize\ttfamily\bfseries, morekeywords={var, public, String, Object}]{code/newAddItem-sufficient1.cs}
		\end{subfigure}
		\!\, 
		\begin{subfigure}{2.24cm} 
			\centering
            \raisebox{1.15mm}{\includegraphics[width=\textwidth]{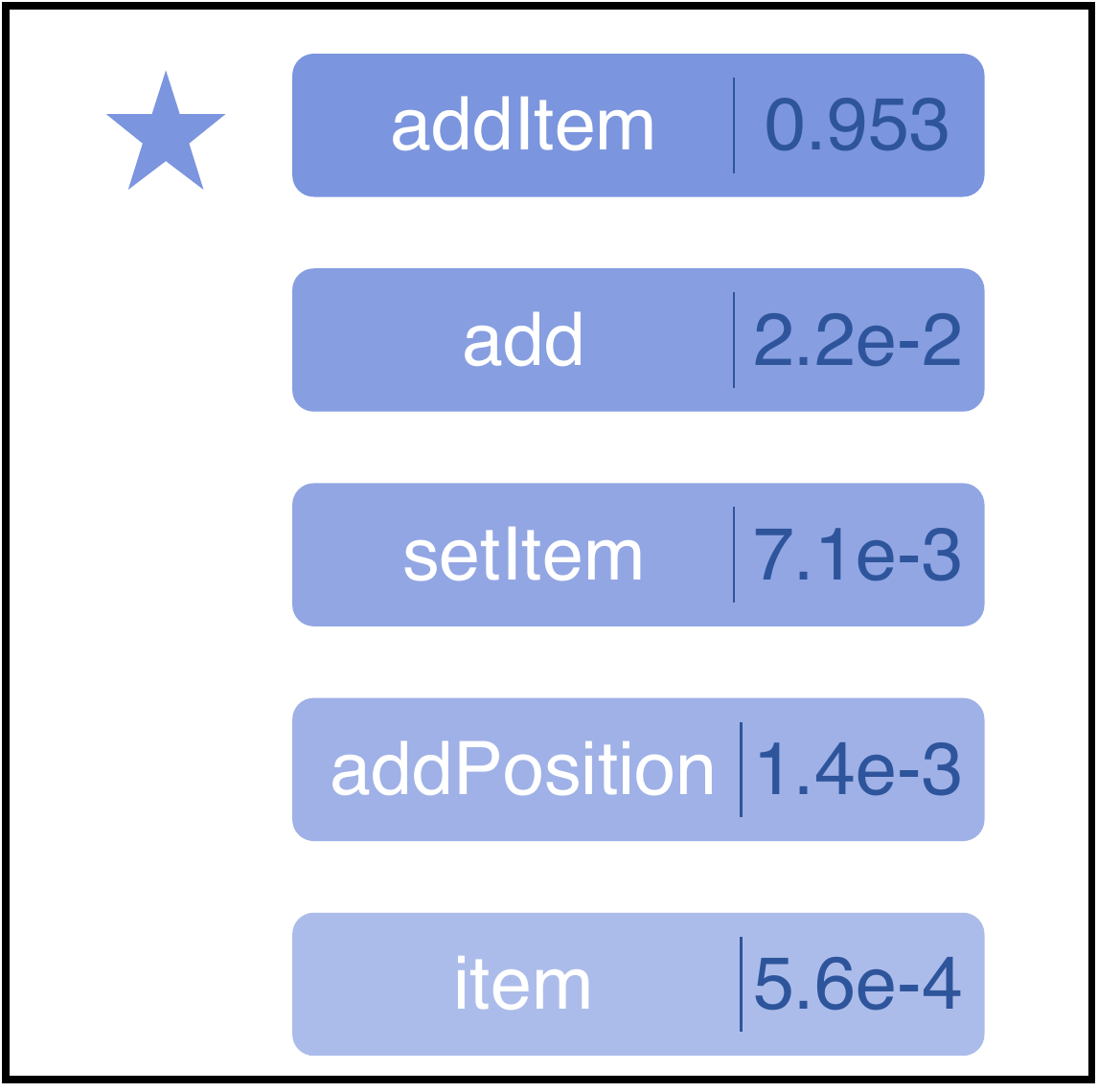}}
		\end{subfigure}	
		\caption{} 
		\label{fig:exasuf}
	\end{subfigure}  
	\,\, 
	\begin{subfigure}{0.54\textwidth}
	\centering
		\begin{subfigure}{0.66\textwidth} 
		\lstset{style=mystyle}\lstinputlisting[basicstyle=\scriptsize\ttfamily\bfseries,morekeywords={var, public, String, Object}]{code/newAddItem-necessary1.cs} 
		\end{subfigure}
				\!\,
		\begin{subfigure}{2.24cm} 
			\centering
            \raisebox{1.15mm}{\includegraphics[width=\textwidth]{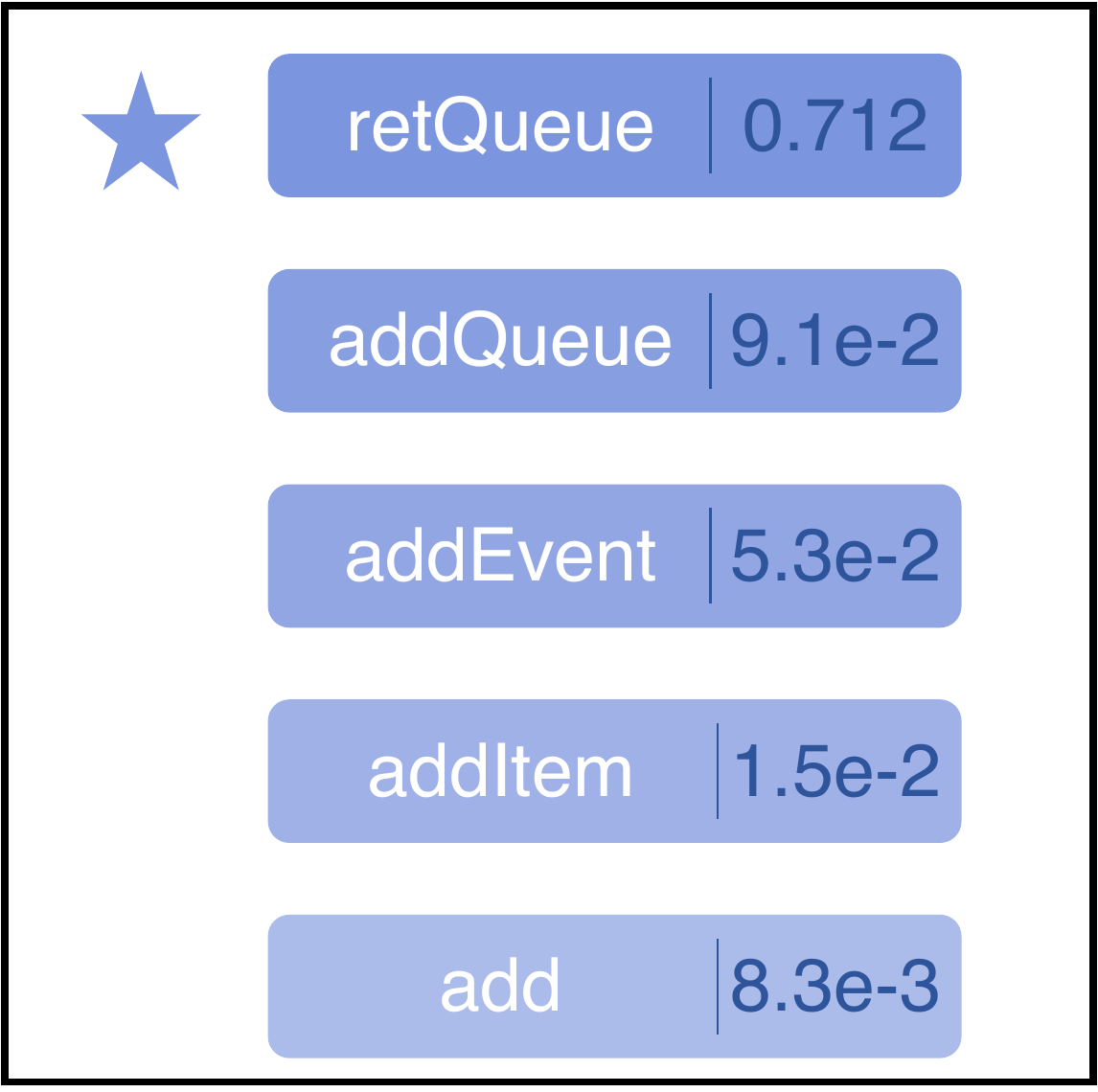}}
		\end{subfigure}	
		\caption{}
		\label{fig:exanec}
	\end{subfigure} 
	\caption{A statement in the example program that satisfies the two constraints.\protect\footnotemark }
\vspace{-5pt}
\end{figure}
\footnotetext{The reason that \whet does not compile is that \motimodel, like most code models, does not require input programs to compile. In fact, \method is general, capable of explaining models with or without compilation requirements.}

Using \method's explanations as the ground-truth,
we find that the ranking produced by some of the most prominent attribution methods routinely underestimates the importance of \whet. Take \motimodel's prediction for the example program as an instance, Figure~\ref{fig:ig} shows that none of the top-five important tokens computed by Integrated Gradients~\cite{Sundararajan2017} is part of the \whet, the underlined expression in the program. This finding rebuts the aforementioned ``sweet-spot'' hypothesis, and confirms that existing attribution methods are unsuitable for \whet detection. 

\begin{figure}[h!]
	\centering 
	\captionsetup{skip=5pt}
	\hspace{1pt} 
	\begin{subfigure}{0.372\textwidth} 
		\lstset{style=mystyle} 
		\lstinputlisting[basicstyle=\scriptsize\ttfamily\bfseries,morekeywords={var, public, String, Object}]{code/newAddItem-ori-mark.cs} 
	\end{subfigure} 
	\,\, 
	\begin{subfigure}{0.1442\textwidth} 
		\raisebox{1.2mm}{\includegraphics[width=\textwidth]{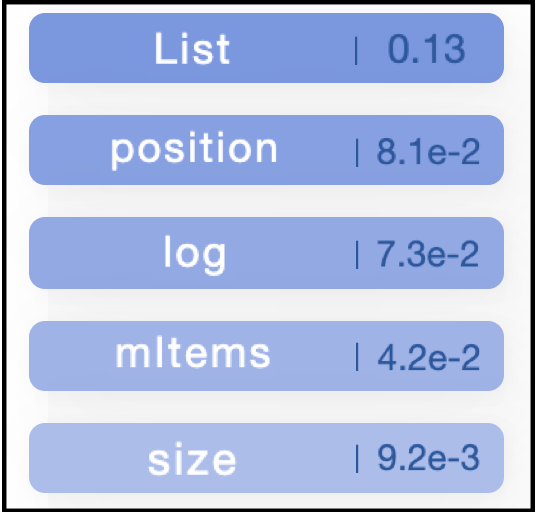}} 
	\end{subfigure} 
	
	\caption{Regarding \motimodel's prediction for the running example, the left shows the five most important features (highlighted within the shadow boxes) computed by Integrated Gradients and the \whet (underlined). The ranking of the attribution scores of the five most important features is shown on the right.}
	\label{fig:ig}
\vspace{-5pt}
\end{figure} 


Identifying the \whet from an input program is a challenging task. 
With a token-based representation, a program composed of $n$ tokens will yield a search space of $2^n$ candidates (\ie, every token may or may not be part of the \whet). Even after taking into account the syntactic and semantic constraints, the search space will not shrink dramatically. As a result, a brute-force search would be computationally intractable for any non-trivial program.

Our solution is based on the finding from~\citet{RABIN2021106552} --- 
few simple code edits, albeit preserving the property of interest for input programs, frequently cause models to alter their predictions --- we hypothesize that models heavily utilize small, local program features for predictions.
To confirm our hypothesis, we conduct a preliminary study in which we test how models respond to new programs obtained via systematic reduction of the input programs.
Quite surprisingly, we find that almost always an input program can be reduced to very few statements (\ie, $\leq$ 3) for which models make the same prediction as they do for the original program. This observation indicates that a small code fragment may already contain the \whet that models need for making the correct prediction.
Therefore, a more efficient approach is to first detect such code fragments, and then locate the \whet within them. At the technical level, we present ``\textit{Reduce} and \textit{Mutate}'', a coarse-to-fine method for identifying the \whet that code models use for prediction. Given a program $P$ for which a model $M$ makes a prediction. First, we find a global minimum program $\bar{P}$ for ${P}$ that satisfies the first two above-mentioned constraints (\textit{coarse-grained} search).
Second, we mutate the expressions 
in $\bar{P}$'s statements to pinpoint the program properties $\tilde{P}$ that led to the satisfaction of the two constraints (\textit{fine-grained} search). Although it looks like, at this point, we have found the \whet $\tilde{P}$ that $M$ uses to predict $P$, 
a core issue must be resolved: are $M$'s predictions for all intermediate programs that \method uses to query $M$ 
(\eg, $\bar{P}$, $\tilde{P}$, \etc) still valid given the seemingly distribution shift between intermediate programs and original programs on which $M$ is trained (\eg, $P$). Inspired by ROAR~\cite{ROAR}, we present a similar approach to tackle the potential out-of-distribution issue. The key idea is to retrain $M$ with additional data that resembles the intermediate programs. If \method finds the same \whet $\tilde{P}$ --- which are now clearly in distribution thanks to the retraining of $M$ --- that retrained $M$ uses to predict $P$, we have confirmed that $\tilde{P}$ is indeed the \whet of $P$ that $M$ uses for prediction.

We realize our approach in a tool, called \tool, and use it to evaluate four pioneering code models: code2vec~\cite{Alon2019code2vec}, \motimodel~\cite{fernandes2018structured}, GGNN~\cite{allamanis2018learning}, CodeBERT~\cite{feng-etal-2020-codebert}. We choose them not only because they are prominent representatives of models of code, but more importantly, they reflect a wide variety of both model architectures and downstream tasks so that the effectiveness and generality of \tool can be thoroughly tested. Our evaluation
results show (1) first, \tool is efficient --- taking on average less than twenty seconds to compute the \whet for each evaluated model; (2) the \whet that all evaluated models use for prediction are simple as they never exceed fifteen tokens, many times down to the name of a single variable; (3) After training models on programs coming from a similar distribution to those with which \method queried the models before, \method almost always finds the same \whet that original and retrained models use for prediction. This is strong evidence that \method's approach is valid in light of the out-of-distribution issues; 
(4) Integrated Gradients and SHAP~\cite{LundbergNIPS2017}, two among the most well-known attribution methods, do not precisely identify the \whet because those they assign higher scores frequently miss out on the \whet.


As an example use of \whet, we present a novel approach to explaining the predictions of code models. At the high-level, our approach answers a more fundamental question: which programs in the training set are most responsible for a model prediction? Ultimately, the weights of a model are derived from the training data. Hence our approach identifies the root cause for a prediction rather than interpreting the internal states of a model after its weights have been learned. At the technical level, given an unseen program $P$ for which a model $M$ predicts a label $L$, we rank $M$'s training programs with the label $L$ based on their distance to $P$, measured using their respective feature properties, and then present the top-$k$ closest programs as the explanations for the prediction $M$ made for $P$.

This paper makes the following contributions:

\begin{itemize}
	\item A method for explaining predictions of models of code.

	\item A definition of the defining features (\ie, \whet), the reason for which code models predict the label that they predict.
	
	\item A method for identifying the \whet that code models use for predictions. 

	\item An implementation, \tool, which we use to evaluate code2vec, \motimodel, GGNN, and CodeBERT. Results show that (1) \tool is efficient: taking on average less than twenty seconds to compute the \whet; (2) all models use simple syntactic or even lexical properties for prediction; (3) Integrated Gradients and SHAP, two attribution methods do not precisely identify the \whet for any evaluated model.
	
	\item An approach to explaining predictions of code models through the lens of training data.
\end{itemize}

The rest of the paper is structured as follows. Section~\ref{sec:form} gives the formalization. Section~\ref{sec:over} describes in detail our method including the discussion on the out-of-distribution issues and 
an example application of \whet. Next, we present our evaluation of \tool (Section~\ref{sec:eva}). Finally, we survey related work (Section~\ref{sec:rel}) and conclude (Section~\ref{sec:conc}).

\section{Formalization}
\label{sec:form}


We consider a machine learning model $M$ as a program (with semantics $\llbracket M \rrbracket$) which executes on an input (\eg, an image,
%
a document, a piece of code, \etc) and returns a prediction as output. Given a prediction $L$ that $M$ makes for an input program $P$
(\ie, $\llbracket M \rrbracket \big(P\big) = L$), the problem \method aims to solve is to identify the \whet $M$ uses for this prediction. Below, we give a formal definition of \whet, which is at the core of \method. 

\begin{definition}(\textit{\Whet})\label{def:keyfea}
The \whet that $M$  uses for predicting the label of $P$ is a set of statements $\tilde{P}$ such that (1) $\tilde{P}$ is a \textit{constituent} of $P$: $\tilde{P}$'s token sequence, denoted by $(t_{n}^{\tilde{P}})_{n\in\mathbb{N}}$, is a \textit{subsequence} of $P$'s denoted by $(t_{m}^{P})_{m\in\mathbb{N}}$. Formally, $(t_{n}^{\tilde{P}})_{n\in\mathbb{N}} = (t_{m_{k}}^{P})_{k\in\mathbb{N}}$ where $(m_k)_{k\in\mathbb{N}}$ is a strictly increasing sequence of positive integers. 
(2) $\tilde{P}$ is \textit{sufficient}: $M$ makes the same prediction for $\tilde{P}$ as it does for $P$ (\ie, $\llbracket M \rrbracket \big(P\big) = \llbracket M \rrbracket \big(\tilde{P}\big) = L$); 
(3) $\tilde{P}$ is \textit{necessary}: $M$ makes a different prediction for $P \setminus \tilde{P}$ than it does for $P$ (\ie, $\llbracket M \rrbracket \big(P \setminus \tilde{P}\big) \neq \llbracket M \rrbracket \big(P\big)$) where $P \setminus \tilde{P}$ denotes the operation that subtracts program $\tilde{P}$ from $P$ (Definition~\ref{def:subtra}).
Finally (4) $\tilde{P}$ is \textit{minimum}: there does not exist $\tilde{P}^\prime$ such that $\tilde{P}^\prime$ satisfies the above three requirements, and $\lvert\tilde{P}^\prime\rvert < \lvert\tilde{P}\rvert$ where $\lvert \cdot \rvert$ denotes the length of a sequence.
\end{definition}

Here we discuss the intuition behind Definition~\ref{def:keyfea}. First, the \textit{constituent} requirement captures an obvious intuition, that is, 
the \whet must be part of an input program. Second, the \textit{sufficient} requirement is also quite intuitive --- as the \whet, they must single-handedly lead models to predict the same label as before without the rest of the input. However, the satisfaction of \textit{sufficient} requirement alone does not suffice features to be the \whet. Consider the following example in Figure~\ref{fig:notnec}. Even though Figure~\ref{fig:notnecel} shows the statement, \texttt{log("Add item;");}, manages to preserve the prediction, \texttt{addItem}, the model makes for the original input (\ie, the \textit{sufficient} requirement is satisfied), it is \textit{not} the reason for which \motimodel makes this prediction because removing it from the input program barely influences \motimodel. As displayed in Figure~\ref{fig:notnecer}, the probabilities at which \motimodel predicts the top-five labels remain almost unchanged. Clearly, this prediction result suggests that \motimodel does not even need \texttt{log("Add item;");} let alone use it as the \whet to predict the example program. This is precisely the reason that~\citet{rabin2021understanding}, which seeks exclusively the sufficient features as model interpretations, is flawed. In fact, our large-scale study shows that removing the features discovered by~\citet{rabin2021understanding} from input programs always never makes models alter their original predictions.

To address this issue, we design the \textit{necessary} requirement: the removal of the \whet from the input program, which we call the \textit{subtraction} operation (Definition~\ref{def:subtra}), must lead models to predict different labels than they did for the original input. 

\begin{figure}[tb!]
	\centering
	\captionsetup{skip=5pt}

	\begin{subfigure}{0.25\textwidth}
	\centering
		\lstset{style=mystyle}\lstinputlisting[basicstyle=\scriptsize\ttfamily\bfseries,morekeywords={var, public, String, Object},xleftmargin=.05\textwidth, xrightmargin=.05\textwidth]{code/newAddItem-sufficient2.cs} 
		\caption{A statement that satisfies the \textit{sufficient} requirement. }
		\label{fig:notnecel}
	\end{subfigure}
	\,
	\begin{subfigure}{0.6\textwidth}
		\centering
		\begin{subfigure}{0.65\textwidth}
			\lstset{style=mystyle}
            \lstinputlisting[basicstyle=\scriptsize\ttfamily\bfseries, morekeywords={var, public, String, Object}]{code/newAddItem-notnecessary2.cs}
		\end{subfigure}
		\!\, 
		\begin{subfigure}{0.303\textwidth}
			\centering
            \raisebox{1.15mm}{\includegraphics[width=\textwidth]{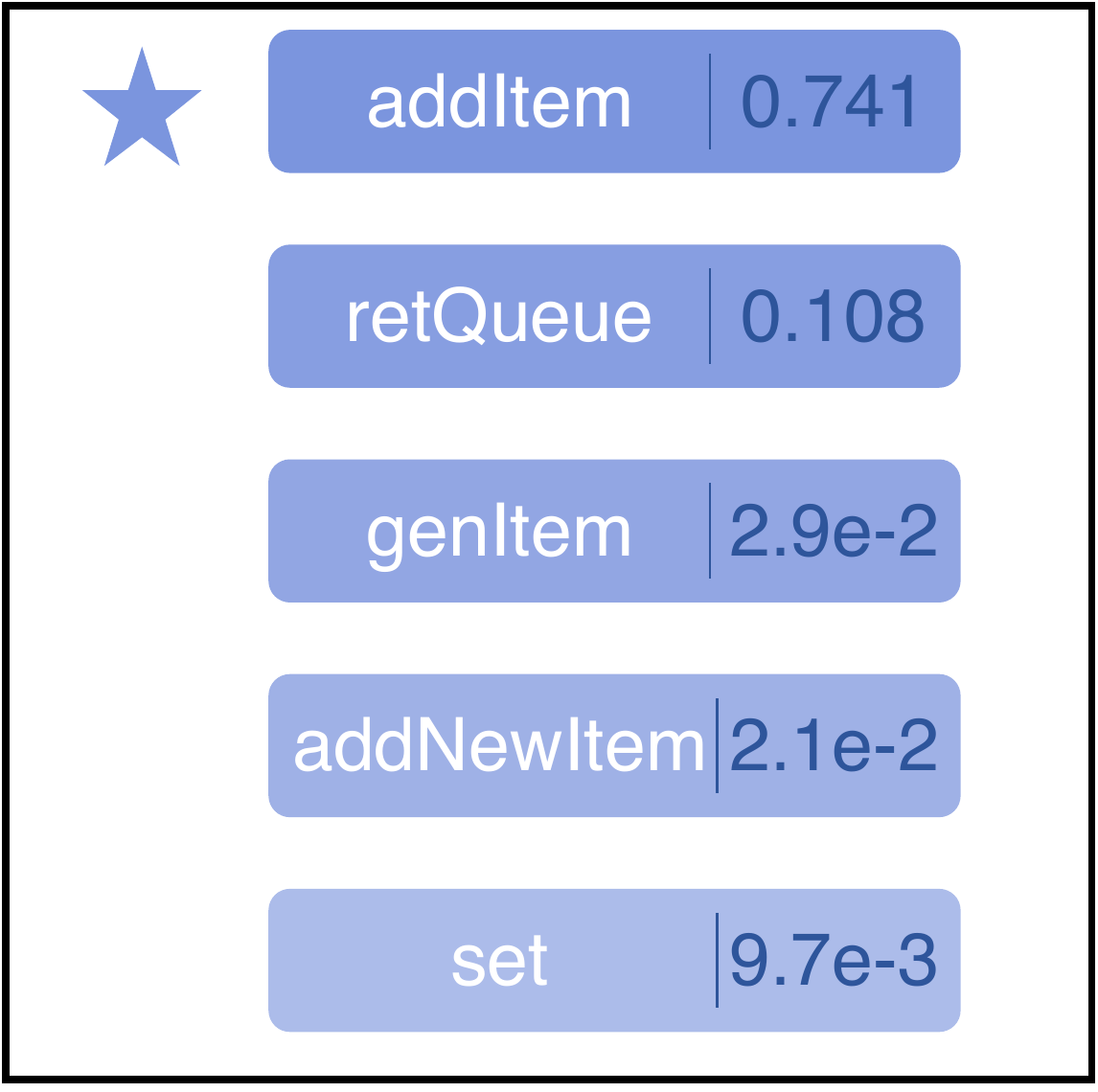}}
		\end{subfigure}	
		\caption{The same statement does not satisfy the \textit{necessary} requirement. \motimodel's prediction probabilities of the top-five labels remain almost unchanged.}
		\label{fig:notnecer}
	\end{subfigure} 	
	\caption{A statement that satisfies the \textit{sufficient} but not \textit{necessary} requirement.}
	\label{fig:notnec}
\vspace{-5pt}
\end{figure} 
 
%

\begin{definition}(\textit{Subtraction})\label{def:subtra}
	Given a program $P$ and a set of statements $\hat{P}$, $P \setminus \hat{P}$ means for each statement $S$ in $\hat{P}$, first locate the subtree, which is equivalent to the Abstract Syntax Tree (AST) of $S$, from the AST of $P$; then remove the located subtree from the AST of $P$. Finally, serialize the resultant AST back to source code.
\end{definition}

Using only the \textit{constituent}, \textit{sufficient}, and \textit{necessary} requirement,
every input program is the \whet for itself --- an uninteresting explanation for any model
prediction. To address this issue, we impose the last constraint to ensure the precision of \whet. 
Specifically, we argue that $\tilde{P}$ should be the globally minimum sequence of tokens since it's the most precise --- covering the pattern that models have learned with the least amount of features (\ie, having the highest signal-to-noise ratio).


%

%

Given Definition~\ref{def:keyfea}, it is obvious that the \whet always exists for any input program for which models make a prediction. We left the proof to the supplemental material. 
\begin{theorem}[Existence of \Whet]\label{the:exis}
	Given a prediction $L$ that $M$ makes for an input program $P$, the \whet $\tilde{P}$ that models use to predict the label of $P$ always exists.
\end{theorem}
%

\section{Methodology}
\label{sec:over}

In this section, we first briefly describe a crucial weakness of models of code, which motivates a key idea of finding the \whet. Then, we give a detailed presentation of
\textit{Reduce} and \textit{Mutate}, a simple yet efficient method for identifying the defining
features. Next, we address the out-of-distribution issues, a potential validity concern about \method's approach. Finally, we present an application of the \whet. 

\subsection{Background}

While deep neural networks have been gaining increasing levels of interest in programming languages research,~\citet{RABIN2021106552} cautioned that some eminent code models are surprisingly unstable with their predictions. Simple, natural, and semantically-preserving transformations frequently cause models to alter their predictions. Here we use an example to demonstrate their finding. Figure~\ref{fig:before} depicts the original method which is correctly predicted by code2vec to be \textit{factorial}; Figure~\ref{fig:after} depicts the transformed method, albeit semantically equivalent, is totally mishandled. None of the top-five predictions even remotely resemble the ground truth considering that all we changed is the order of the operands in a multiplication expression.

\begin{figure}[tb]
	\centering
    \captionsetup{skip=5pt}
	\begin{subfigure}{0.45\textwidth}
		\centering
		\begin{subfigure}{0.45\textwidth}
			\lstset{style=mystyle}
			\lstinputlisting[linewidth=2.8cm,morekeywords={var, public, String}]{code/facbef.cs}
		\end{subfigure}
		\!\,
		\begin{subfigure}{0.4375\textwidth}
			\centering
			\raisebox{0.9mm}{\includegraphics[width=\textwidth]{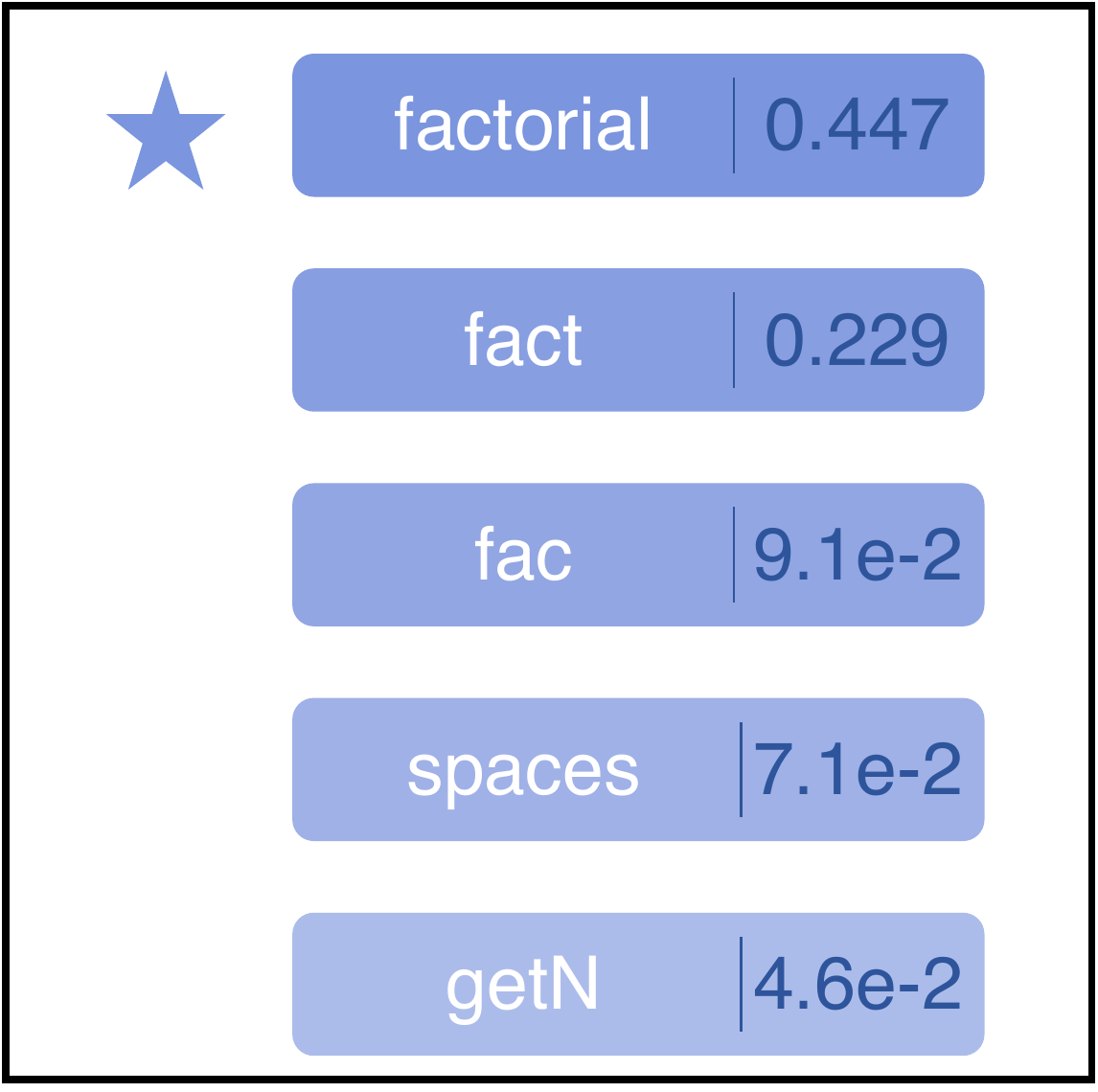}}
		\end{subfigure}	
		\caption{Prediction for the original method.}
		\label{fig:before}
	\end{subfigure} 
	\!\,
	\begin{subfigure}{0.45\textwidth}  
		\centering
		\setlength{\belowcaptionskip}{-1pt}  
		\begin{subfigure}{0.45\textwidth}
			\lstset{style=mystyle}
			\lstinputlisting[linewidth=2.8cm,morekeywords={var, public, String}]{code/facaft.cs}
		\end{subfigure}
		\!\,
		\begin{subfigure}{0.4375\textwidth}
			\centering
			\vspace{.19pt}
			\raisebox{0.9mm}{\includegraphics[width=\textwidth]{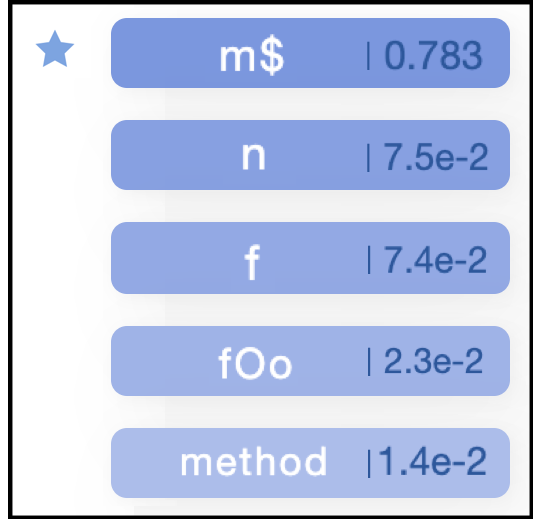}}
		\end{subfigure}	
		\caption{Prediction for the transformed method.}
		\label{fig:after}	
	\end{subfigure}
	\caption{A simple, natural, semantically-preserving transformation causes code2vec to change its prediction. Note that the probability of the top-one prediction is even higher on the transformed method. }
	\label{fig:fac}
	\vspace{-4pt} 
\end{figure}

\begin{table}[b]
	\captionsetup{skip=1pt}	
	\caption{Reducing the running example using DD. $\Delta_i$ denotes partitions and $\nabla_i$ is the complement of $\Delta_i$. For simplicity, we use tokens to represent programs that are tested against the \textit{sufficient} and \textit{necessary} requirement at each step. The last column shows the requirement that partitions do not satisfy.}
	\centering
	\adjustbox{max width=\linewidth}{
		\begin{tabular}{c|c|ccccc|c}
			\hline
			\multirow{2}{*}{\tabincell{c}{Step}} & \multirow{2}{*}{\tabincell{c}{Partition}} & \multicolumn{5}{c|}{Tokens} & \multirow{2}{*}{\tabincell{c}{Unsatisfied}} \\ 
			& & size  &  return &  mItems &  add &  position  &  \\\hline		
			1 & $\Delta_{1}$  & $\checkmark$  &  &  &  &  & Both  \\\cline{1-8}	
			2 & $\Delta_{2}$  &   & $\checkmark$ &  &  &   & Both \\\cline{1-8}	
			3 & $\Delta_{3}$  &   &  & $\checkmark$ &  &   &   Both  \\\cline{1-8}
			4 & $\Delta_{4}$  &   &   &   & $\checkmark$ &  &   Both  \\\cline{1-8}
			5 & $\Delta_{5}$  &   &   &   &  & $\checkmark$  &  Both  \\\cline{1-8}
			6 & $\nabla_{1}$  &   &  $\checkmark$  &  $\checkmark$  & $\checkmark$ & $\checkmark$  & \textit{Sufficient} \\\cline{1-8}
			7 & $\nabla_{2}$  &  $\checkmark$   &  &  $\checkmark$  & $\checkmark$ & $\checkmark$  &\textit{Sufficient}  \\\cline{1-8}
			8 & $\nabla_{3}$  &  $\checkmark$   &  $\checkmark$  &  & $\checkmark$ & $\checkmark$  &   \textit{Sufficient}  \\\cline{1-8}
			9 & $\nabla_{4}$  &   $\checkmark$  & $\checkmark$  & $\checkmark$ &  & $\checkmark$ &  \textit{Sufficient}  \\\cline{1-8}
			10 & $\nabla_{5}$  &   $\checkmark$  & $\checkmark$  & $\checkmark$ &   $\checkmark$ &  &  \textit{Sufficient}  \\\hline
		\end{tabular}
	}
	\label{tab:expmono}
\end{table}

\vspace*{3pt}
\noindent
\textbf{\textit{Applicability of Delta Debugging.}}\,
Their finding suggests that models don't evenly distribute their attention across the entire structure of input programs; instead, they focus on a small fragment of code. At first glance, Delta Debugging (DD)~\cite{zeller2002simplifying} seems to be a perfect approach to finding such code fragment from input programs. In particular, we can apply DD to remove the irrelevant code in reference to the \textit{sufficient} and \textit{necessary} requirement.
It is worth mentioning a property that has to be satisfied in order for DD to be applicable is \textit{Monotony} (Definition 6 in~\citet{zeller1999yesterday}), meaning, in the software debugging setting, whenever an input causes a program to fail a test, others that include this input will also cause the program to fail. This property is crucial because it allows DD to remove partitions of the input that are unrelated to the cause of the failure. As a result, the search space still shrinks even when the failure-inducing faults can not be precisely located from the input. Conversely, without the monotony property, an irrelevant partition may not be removed from the input because its complement, albeit including the failure-inducing faults, can still have the program pass the test. We can easily redefine monotony in our settings to be whenever $p$ satisfies both the \textit{sufficient} and \textit{necessary} requirement, then every super-sequence of $p$  satisfies the two requirements as well, formally, 
$\forall p \preceq P \, (\llbracket M \rrbracket  \big(p\big) =  L \land \llbracket M \rrbracket\big(P \setminus  p\big) \neq  L \land p \preceq p') \Rightarrow \llbracket M \rrbracket \big(p'\big) =  L \land \llbracket M \rrbracket\big(P \setminus  p'\big) \neq L $
where $p\preceq P$ denotes $p$ is a sub-sequence of $P$.
It is rather clear that machine learning models do not guarantee to satisfy this property. 
Below, we use the running example to demonstrate \motimodel's non-satisfaction of the monotony property (Table~\ref{tab:expmono}), and in turn the ramifications of applying DD for \whet detection. 

We skip the steps in which DD successfully reduces the example into the code fragment consisting of \texttt{size();}, \texttt{return;} and \texttt{mItems.add(position);}. Since then DD can not make further reductions even though the fragment is clearly not the \whet. In fact, DD had an opportunity to remove either the statement \texttt{size();} (at Step 6), the statement \texttt{return;} (at Step 7), or the parameter \texttt{position} (at Step 10) had the model satisfied the monotony property. That is, in any of the three steps, the reduced program, which includes the \whet, would have satisfied the \textit{sufficient} and \textit{necessary} requirement and allowed DD to proceed. However, since predictions of \motimodel are not monotonic, the resultant programs at all three steps turn out to violate the \textit{sufficient} requirement. As a result, DD gets stalled and the final output is not minimal. For interested readers, we left DD's complete procedure in reducing the example program to Section~\ref{app-app:dd}  in the supplemental material.

Although refining DD to deal with non-monotony subjects is a pathway forward, it is out of the scope of this work. Instead, we present a simple, efficient search strategy that first aggressively prunes the search space of the \whet (\ie, the goal of \textit{Reduce}), and then precisely locate its constituent program properties from the remaining code fragment (\ie, the goal of \textit{Mutate}).

\setlength\fboxrule{0.6pt}
\lstset{style=mystyle}	
\begin{figure*}[tbp]
		\captionsetup{skip=10pt}
\setlength{\belowcaptionskip}{-10pt}
	\centering
	\begin{tikzpicture}[font=\small,thick,node distance=0.5cm]
	
	\matrix[matrix of nodes, shape=rectangle,draw,
	nodes={anchor=center},column sep=-1.5em, row sep=1pt, inner sep=1.65pt,outer sep=0pt, ampersand replacement=\&](blockStep1){%
		\begin{minipage}[t]{0.32\textwidth}
		\vspace{-0.9pt}
		
		\textbf{\scriptsize{\textit{Step} \circled{1}}}
		
		\vspace{1pt}
		
		\lstinputlisting[frame=none, morekeywords={var, public, String}]{code/newAddItem-reduce1.cs}
		\end{minipage}
		\&
		\hspace{0.90em}
		\includegraphics[width=0.03\textwidth]{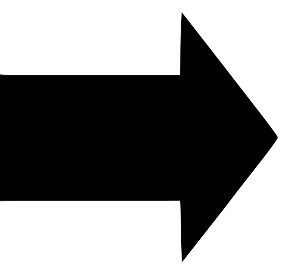}
		\&
		\hspace{1.15em}
		\fbox{
			\begin{minipage}[t]{0.316\textwidth}
			\centering
			\vspace{1pt}
			
			\hspace{1pt}\textbf{\scriptsize{List of statements}}
			
			\vspace{3pt} 
			
			\adjustbox{max width=1\textwidth}{
				\begin{tabular}{>{\columncolor{Gray}}c | c}
				\hline
				Stmt. 1 & \texttt{List<Obj> mItems = retQueue(); }  \\\hline
				Stmt. 2 & \texttt{if(position > mItems.size());} \\\hline
				Stmt. 3 & \texttt{return;} \\\hline
				Stmt. 4 & \texttt{\tabincell{c}{mItems.add(position,\\genItem());}} \\\hline
				Stmt. 5 & \texttt{\tabincell{c}{notifyItemInserted(\\position); }} \\\hline
				Stmt. 6 & \texttt{log("Add item;"); } \\\hline
				\end{tabular}
			}
			\vspace{1pt}
			
			\end{minipage}
		}
		
		\\
	};
	
	
	\node[draw,
	right=of blockStep1] (blockStep2) { 
		\begin{minipage}[t]{0.2\textwidth} 
		\textbf{\scriptsize{\textit{Step} \circled{2}}}
		\lstinputlisting[frame=none,  morekeywords={var, public, String}]{code/newAddItem-reduce2.cs}
		\label{fig:motireduce22}	
		\end{minipage}
	};  
	
	\node[draw,
	below=of blockStep2] (blockStep21) {  
		\begin{minipage}[t]{0.2\textwidth}
		\textbf{\scriptsize{\textit{Step} \circled{3}}}
		\lstinputlisting[frame=none, morekeywords={var, public, String}]{code/newAddItem-reduce2.1.cs}
		\label{fig:motireduce21}		
		\end{minipage}
	}; 
	
	\node[draw,
	left=of blockStep21] (blockStep3) {  
		\begin{minipage}[t]{0.32\textwidth}
		\textbf{\scriptsize{\textit{Step} \circled{4}}}
		\lstinputlisting[frame=none, morekeywords={var, public, String}]{code/newAddItem-reduce3.cs}
		\label{fig:motireduce23}		
		\end{minipage}
	};

	\node[draw,
	left=of blockStep3] (blockStep4) {  
		\begin{minipage}[t]{0.32\textwidth}
		\textbf{\scriptsize{\textit{Step} \circled{5}}}
		\lstinputlisting[frame=none, morekeywords={var, public, String, Object}]{code/newAddItem-reduce4.cs}
		\end{minipage}
	};

	
	\draw[-{Triangle[width=10pt,length=4pt]}, line width=5pt](blockStep1) -- (blockStep2); 
	
	
	\draw[-{Triangle[width=10pt,length=4pt]}, line width=5pt](blockStep2)-- (blockStep21);
	
	\path [draw half paths={dotted,line width=2pt }{line width=5pt,-{Triangle[width=10pt,length=4pt]},solid}]  (blockStep21) -- (blockStep3);

	
	

	\draw[-{Triangle[width=10pt,length=4pt]}, line width=5pt](blockStep3)-- (blockStep4);


	\end{tikzpicture}
	\caption{High-level steps of \textit{Reduce}. The label that \motimodel predicts at each step is highlighted in shadowbox.}
	\label{fig:highreduce}
\end{figure*}

\subsection{Reduce} 
Because fragments that contain the \whet are usually small, the search can be made very efficient by testing out smaller code fragments first. Figure~\ref{fig:highreduce} illustrates the high-level steps of \textit{Reduce} on the running example.

First, we flatten the input method into a list of statements (\textit{Step }\textbf{\circled{1}}).
We then traverse the fragments (\ie, combination of statements) in the ascending order of their size, starting with those that contain only one statement.
We pick \textit{Stmt. 1}: \texttt{List<Obj> mItems=retQueue();} as the first statement to check against the \textit{sufficient} and \textit{necessary} requirement. As depicted in \textit{Step }\textbf{\circled{2}}, \motimodel makes a different prediction for this statement\footnote{To facilitate readers' understanding, we highlight the predicted label within the shadowbox at each step in Figure~\ref{fig:highreduce} and~\ref{fig:mutateprocedure}.}, thus, the \textit{sufficient} requirement is not met. 
We move on to the other statements. As shown at \textit{Step }\textbf{\circled{3}}, \textit{Stmt. 2} also does not satisfy the \textit{sufficient} requirement. We omit \textit{Stmt. 3} for its non-satisfaction of the \textit{sufficient} requirement either. Next, we arrive at \textit{Stmt. 4}: \texttt{mItems.add(position, genItem());}. In particular, we find that \textit{Stmt. 4} satisfies the \textit{sufficient} requirement at \textit{Step }\textbf{\circled{4}} and \textit{necessary} requirement at \textit{Step }\textbf{\circled{5}}, we declare \textit{Stmt. 4} to be the fragment that contains the \whet.
To avoid missing other candidates, we continue the traversal until reaching the last statement \textit{Stmt. 7} \texttt{log("Add item;");}, which violates the \textit{necessary} requirement. Because future fragments to be explored will be non-minimum compared to \textit{Stmt. 4}, the \textit{Reduce} step completes with the lone output of \textit{Stmt. 4}.

\begin{algorithm}[!tbp]

	\algnewcommand\Sz{size\xspace}
	\algnewcommand\Md{method\xspace}
	\algnewcommand\Mn{method.name\xspace}		
	\algnewcommand\Mb{method.body\xspace}	
	\algnewcommand\Mp{method.parameters\xspace}	
	\algnewcommand\Mh{method.head\xspace}	
	\algnewcommand\Mr{method\_header\xspace}		
	\algnewcommand\Sd{new\xspace}	
	\algnewcommand\Sr{new Method(remaining\_statements, method.head)\xspace}		
	\algnewcommand\Ml{model\xspace}
	\algnewcommand\Lb{label\xspace}
	\algnewcommand\Pd{result\xspace}	
	\algnewcommand\K{k\xspace}
	\algnewcommand\I{i\xspace}
	\algnewcommand\S{s\xspace}
	\algnewcommand\St{selected\_statements\xspace}		
	\algnewcommand\Rs{remaining\_statements\xspace}
	\algnewcommand\Ts{statements\xspace}
	\algnewcommand\Ti{statements[i]\xspace}	
	\algnewcommand\Tz{statements.size\xspace}
	\algnewcommand\To{statements.size-1\xspace}		
	\algnewcommand\Tr{traversed\_statements\xspace}
	\algnewcommand\Se{fragment\xspace}	
	\algnewcommand\Rm{region[-1]\xspace}	
	\algnewcommand\Mm{m\xspace}
	\algnewcommand\Ic{is\_continue\xspace}
	\algnewcommand\Nm{m\xspace}
	\algnewcommand\Ci{index\xspace}
	\algnewcommand\Cd{depth\xspace}
	\algnewcommand\Ip{index++\xspace}
	\algnewcommand\Dp{depth++\xspace}
	\algnewcommand\Me{max\_depth\xspace}
	\algnewcommand\Sa{selected\xspace}			
	\algnewcommand\Lz{selected.size\xspace}
	\algnewcommand\rec{Reconstruct\xspace}
	\algnewcommand\vef{VerifyCodeFragment\xspace}
	\algblockdefx[Foreach]{Foreach}{EndForeach}[1]{\textbf{foreach} #1 \textbf{do}}{\textbf{end foreach}}

	\caption{Find the minimum fragment of code.}
	\label{alg:findregionsimp}
 \begin{algorithmic}[1]
\Procedure{FindMinimumFragment}{$program, model$}
 \State statements $\gets$ \textit{Flatten}\! ({program}) \label{line:flat}
 \LineComment{\ \ \ \ \(\triangleright\)  traversing all subsets in the ascending of the }
 \LineComment{\ \ \ \ \hspace{\algorithmicindent}    cardinality; `-1' excludes the entire set.}
\For{k $\gets 1$  \textbf{to} statements.size - 1 \label{line:loop1}}
  	\LineComment{\ \ \ \  \ \ \ \  \(\triangleright\) getting combinations of size {k}}
  	\State sets $\gets$ \textit{CombineKStmt}\! (\Ts, \K) \label{line:combk}  
	\State fragments $\gets$ $\varnothing $
	\Foreach{set $\in$ sets}
			\State suff\_mth, nec\_mth $\gets$ \textit{\rec}\! (set, program)  \label{line:reconsimp} 
			\If{\textit{\vef}\! ({suff\_mth, nec\_mth, \Ml})} \label{line:verifysimp}	
				\State fragments $\gets$ fragments $\cup$ set \label{line:addtoregsimp}
			\EndIf						
	\EndForeach
\If{fragments $\neq$ $\varnothing$} \label{line:ifregionsimp}
	\State  \Return{fragments} \label{line:retregionsimp}
\EndIf 
\EndFor
\EndProcedure
\end{algorithmic}
\end{algorithm}

Algorithm~\ref{alg:findregionsimp} gives the details of the \textit{Reduce} step. The goal of the \texttt{Flatten} function is to reduce an input program into a list of statements (line~\ref{line:flat}). The \texttt{for} loop at line~\ref{line:loop1} gradually increases the size of the subsets drawn from the  \texttt{Flatten} function (line~\ref{line:flat}) while searching for the minimum code fragments. The \texttt{CombineKStmt} function at line~\ref{line:combk} creates subsets of statements with size \texttt{k}. Then, for each subset, the algorithm invokes \texttt{Reconstruct} function (line~\ref{line:reconsimp}) to return two new programs: one for verifying against the \textit{sufficient} requirement (\ie, \texttt{suff\_mth}) and the other for verifying against the \textit{necessary} requirement (\ie, \texttt{nec\_mth}). In particular, \texttt{suff\_mth} is simply the current subset being explored, whereas \texttt{nec\_mth} is resulted from the subtraction of \texttt{suff\_mth} from \texttt{program} (Definition~\ref{def:subtra}).
Once the created subset is found out to be valid by \texttt{VerifyCodeFragment} 
function (line~\ref{line:verifysimp}), we add it to the collection of all minimum subsets \texttt{fragments} (line~\ref{line:addtoregsimp}). After we have traversed 
all combinations of \texttt{k} statements, we return \texttt{fragments} if it's not an empty set (line~\ref{line:retregionsimp}), otherwise, we will continue to explore the code fragments with the size of \texttt{k+1}.

\vspace*{3pt}
\noindent
\textbf{\textit{Reduce vs. DD Regarding Non-monotony.}}\, Since the code fragments that
\textit{Reduce} identifies also include extra code that is irrelevant to the \whet, \textit{Reduce} is affected by models' non-monotony, the same way DD is affected. However, the degree to which non-monotony impacts \textit{Reduce} is in general less than that impacts DD. This is because the extra features included always stem from the same statement where the \whet is, as a result, models, by and large, still behave monotonically (\eg, \motimodel makes the same prediction for \texttt{mItems.add(position,genItem());} and \texttt{mItems.add()}). On the contrary, what DD encounters 
is that the extra code included along with the \whet is significantly less constrained (\ie, it can be any statement or expression in the program) and come in larger quantities, making models far more likely to be non-monotonic. 

 \lstset{style=mystyle}	
\begin{figure*}[tbp]
		\captionsetup{skip=10pt}
\setlength{\belowcaptionskip}{-10pt}
	\centering
	\begin{tikzpicture}[font=\small,thick, node distance=0.5cm]
	
	\node[draw](blockStep1) {
		\begin{minipage}[t]{0.285\textwidth}
		\textbf{\scriptsize{\textit{Step} \circled{1}}}
		\lstinputlisting[frame=none, morekeywords={var, public, String}]{code/newAddItem-mutate1.cs}
		\end{minipage}
	};

	\node[draw,
	right=of blockStep1] (blockStep2) {  
		\begin{minipage}[t]{0.29\textwidth}
		\textbf{\scriptsize{\textit{Step} \circled{2}}}
		\lstinputlisting[frame=none,  morekeywords={var, public, String}]{code/newAddItem-mutate2.1.cs}
		\label{fig:motireduce1}
		\end{minipage} 
	};  
	
	\node[draw,
	right=of blockStep2] (blockStep3) {  
		\begin{minipage}[t]{0.285\textwidth}
		\textbf{\scriptsize{\textit{Step} \circled{3}}}
		\lstinputlisting[frame=none,  morekeywords={var, public, String}]{code/newAddItem-mutate2.cs}
		\label{fig:motireduce2}	
		\end{minipage}
	}; 
	
	\node[draw,
	below=of blockStep3] (blockStep4) {   
		\begin{minipage}[t]{0.285\textwidth}
		\textbf{\scriptsize{\textit{Step} \circled{4}}}
		\lstinputlisting[frame=none, morekeywords={var, public, String}]{code/newAddItem-mutate5.cs}
		\label{fig:motireduce3}		
		\end{minipage}
	}; 
	
	\node[draw,
	left=of blockStep4] (blockStep5) {  
		\begin{minipage}[t]{0.29\textwidth}
		\textbf{\scriptsize{\textit{Step} \circled{5}}}
		\lstinputlisting[frame=none, morekeywords={var, public, String}]{code/newAddItem-mutate6.cs} 
		\label{fig:motireduce4}		
		\end{minipage}
	};

	\node[draw,
	left=of blockStep5] (blockStep6) {   
		\begin{minipage}[t]{0.285\textwidth}
		\textbf{\scriptsize{\textit{Step} \circled{6}}}
		\lstinputlisting[frame=none, morekeywords={var, public, String}]{code/newAddItem-mutate7.cs}
		\label{fig:motireduce5}		
		\end{minipage}
	}; 
	
	\draw[-{Triangle[width=10pt,length=4pt]}, line width=5pt](blockStep1)-- (blockStep2);
	
	\path [draw half paths={dotted,line width=1.2pt }{line width=5pt,-{Triangle[width=10pt,length=4pt]},solid}]  (blockStep2) -- (blockStep3);
	
	\draw[-{Triangle[width=10pt,length=4pt]}, line width=5pt](blockStep3) -- (blockStep4);
	
	\draw[-{Triangle[width=10pt,length=4pt]}, line width=5pt](blockStep4) -- (blockStep5); 
	
	\draw[-{Triangle[width=10pt,length=4pt]}, line width=5pt](blockStep5) -- (blockStep6);

	\end{tikzpicture}
	\caption{High-level steps of \textit{Mutate}. {\setlength{\fboxsep}{1pt}\colorbox{remove-line}{- expression}} (\resp, {\setlength{\fboxsep}{1pt}\colorbox{add-line}{+ expression}}) signals removed (\resp, added) lines.}
	\label{fig:mutateprocedure} 
\end{figure*}

\subsection{Mutate}
The features discovered in the \textit{Reduce} step lie at the level of statements (\ie, coarse-grained), thus they are likely to contain redundant elements. To pinpoint the fine-grained \whet, we mutate the program discovered in the \textit{Reduce} step in an attempt to identify the minimal features that keep the \textit{sufficient} and \textit{necessary} requirement satisfied.

Since the number of tokens in the \whet is not guaranteed to be so small as that of the statements in the minimum code fragment, the style of search adopted by the \textit{Reduce} step is likely to be inefficient for the \textit{Mutate} step.
For this reason, we only remove the irrelevant part of the code fragment that does not cause the violation of either the \textit{sufficient} or \textit{necessary} requirement.

\vspace*{3pt}
\noindent
\textbf{\textit{Mutation as well as Reduction.}}\, It is worth mentioning that a significant difference between the two steps is \textit{Mutate} does not adopt solely a program reduction approach to identifying the fine-grained \whet.
As an example, suppose a code fragment is already successfully reduced to a field access expression, \texttt{foo.bar}. We then attempt to further reduce the expression into a single identifier \texttt{foo}. If \texttt{foo} turns out to be invalid, we will mutate \texttt{foo} back into a field access expression, this time with an out-of-vocabulary field name instead of \texttt{bar} --- \texttt{foo.oov}\footnote{out-of-vocabulary words are those models have never encountered during training. Thus, replacing a token with an out-of-vocabulary word erases the influence of the replaced name on models. Throughout the paper, we use \texttt{oov} to represent out-of-vocabulary words for simplicity and clarity.}. 
Our rational is, even if \texttt{foo.bar} satisfies both the \textit{sufficient} and \textit{necessary} requirement, it is still possible that models use the \textit{type} of the syntactic structure --- field access expression --- associated with the object \texttt{foo} rather than the specific name of its field --- \texttt{bar} --- as the \whet. To deal with 
features of this kind that are not explicitly presented, 
we mix the deletion and modification operations in the \textit{Mutate} step to pinpoint both the explicit and implicit fine-grained features.

\begin{figure}[tb!]
	\centering
	\setlength\fboxrule{0.4pt}
    \captionsetup{skip=5pt}

	\begin{subfigure}{0.48\textwidth}
		\centering
		\fbox{\includegraphics[width=\textwidth]{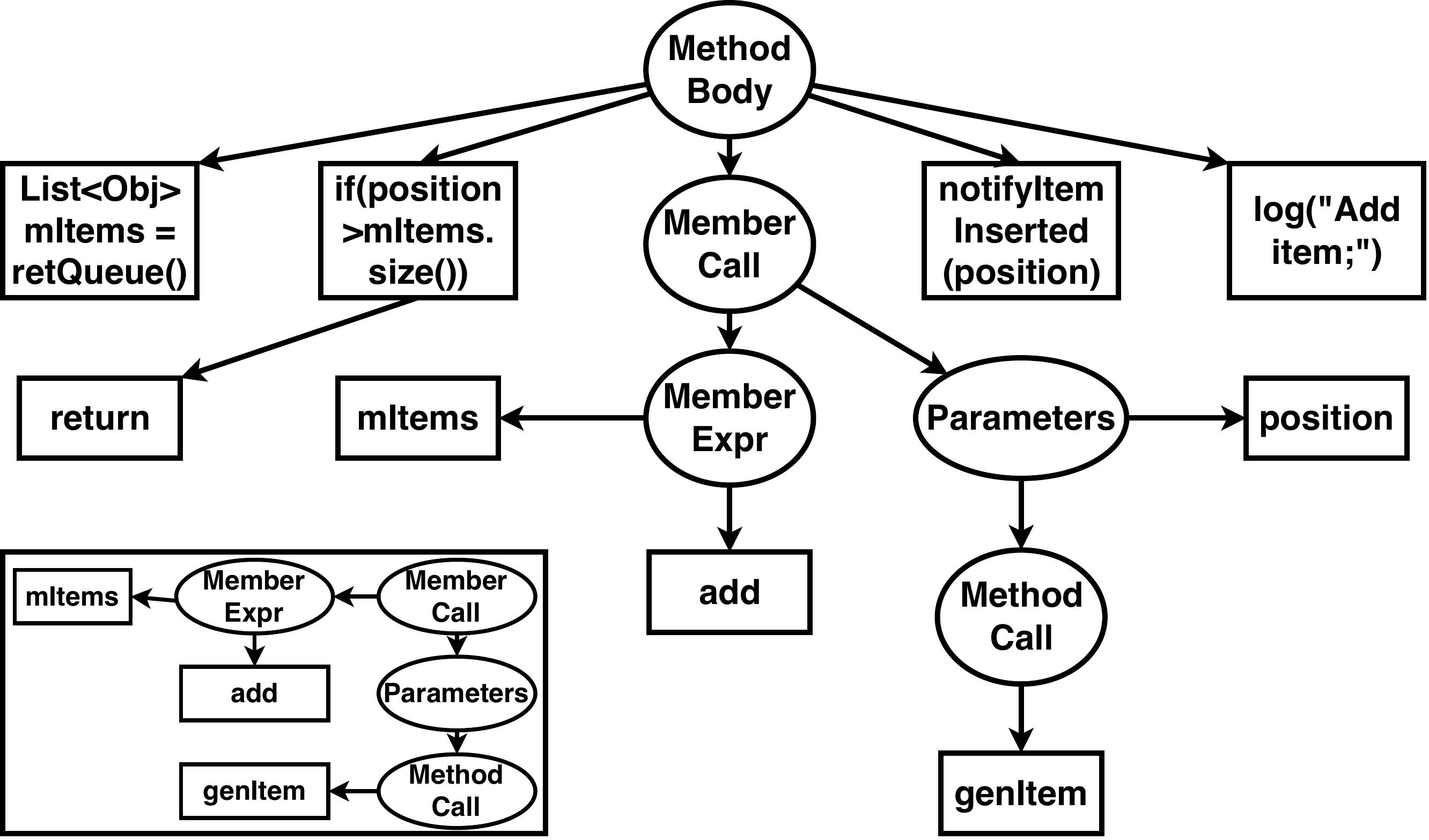}}
		
		\caption{}
		\label{fig:subtraction1}
	\end{subfigure}
	\,\,
	\begin{subfigure}{0.48\textwidth}
		\centering
		\fbox{\includegraphics[width=\textwidth]{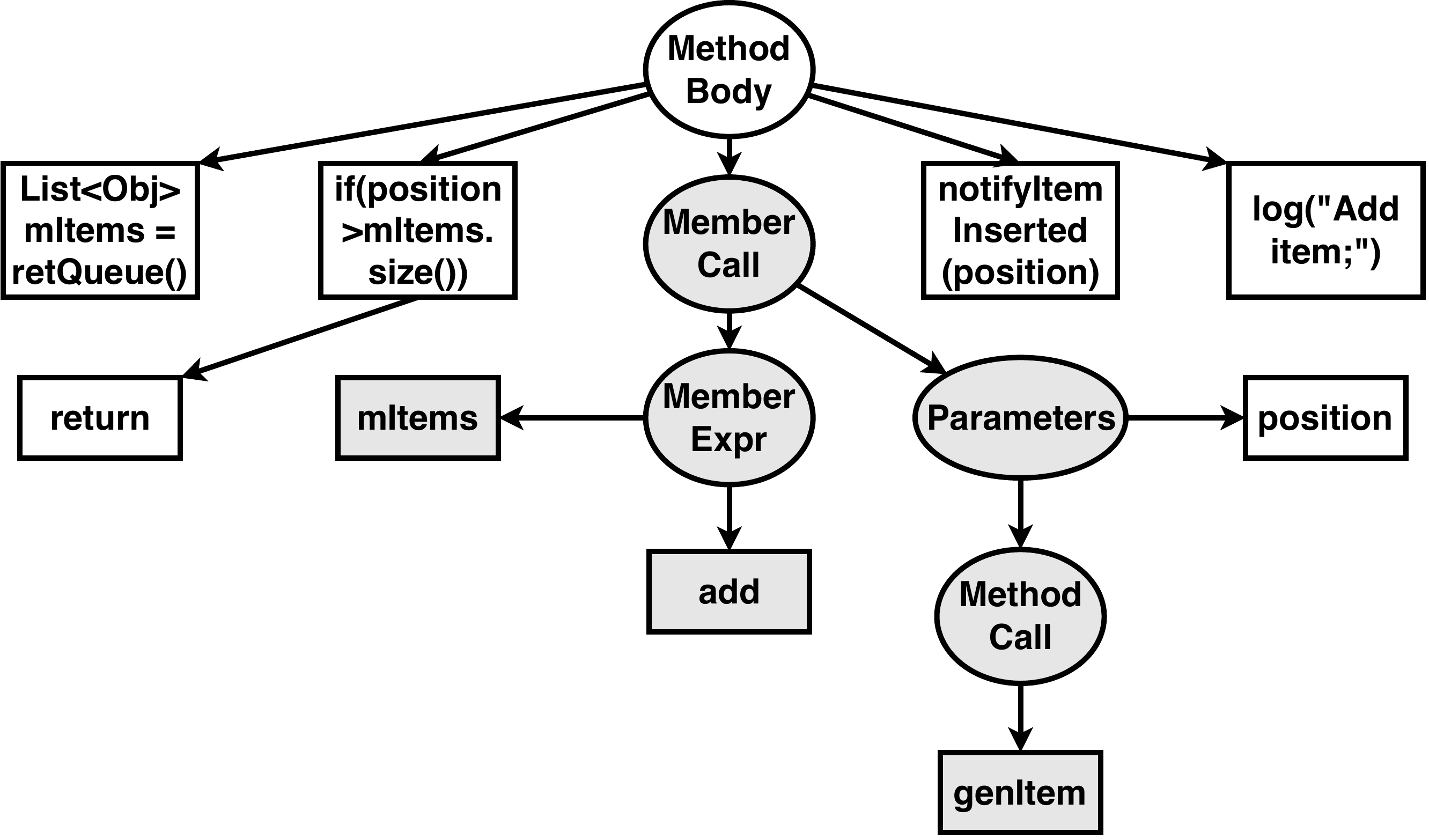}}
		
		\caption{}
		\label{fig:subtraction2}
	\end{subfigure}
	
	\begin{subfigure}{0.48\textwidth}
		\centering
		\fbox{\includegraphics[width=\textwidth]{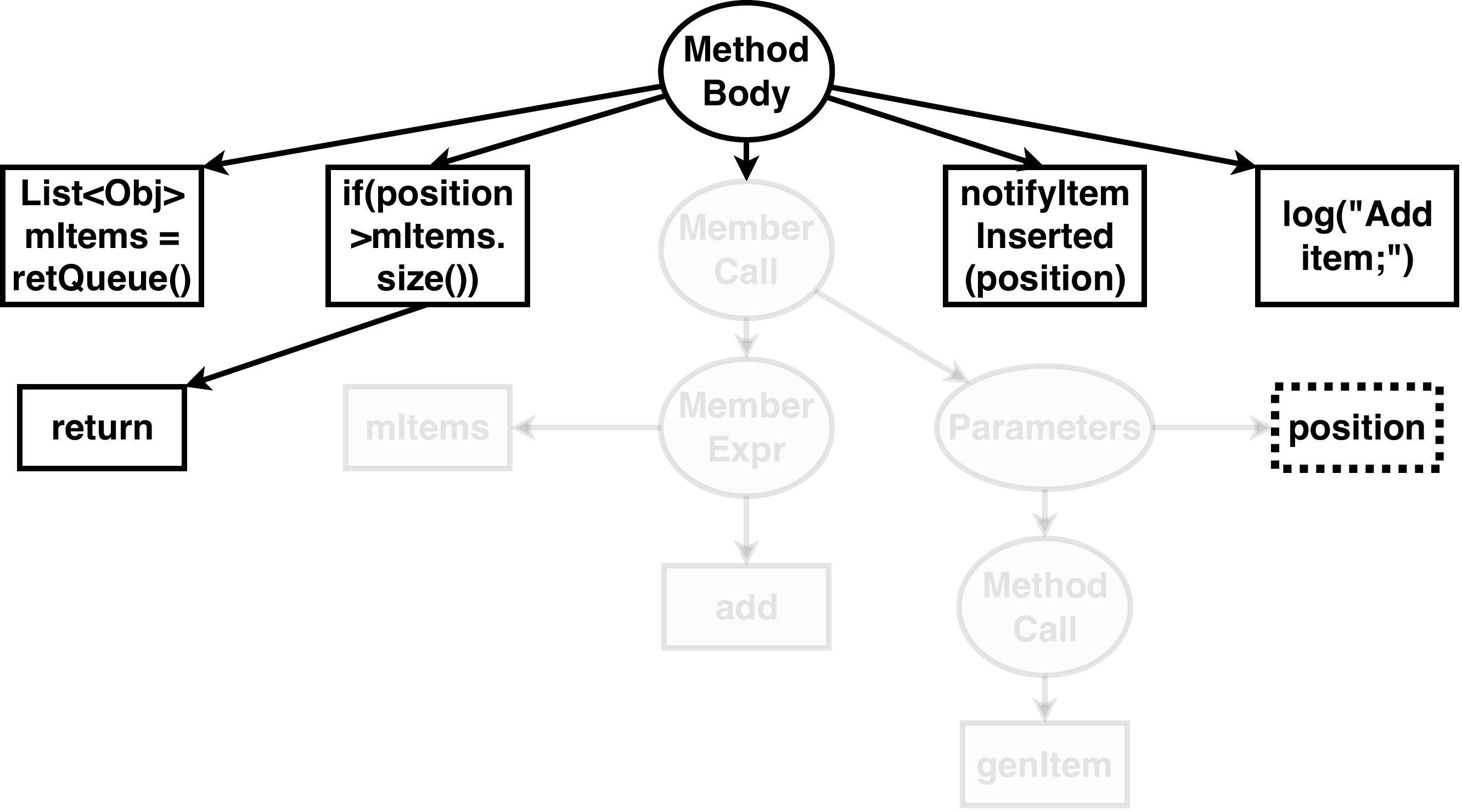}}
		
		\caption{}
		\label{fig:subtraction3}
	\end{subfigure}
	\,\,
	\begin{subfigure}{0.48\textwidth}
		\centering
		\fbox{\includegraphics[width=\textwidth]{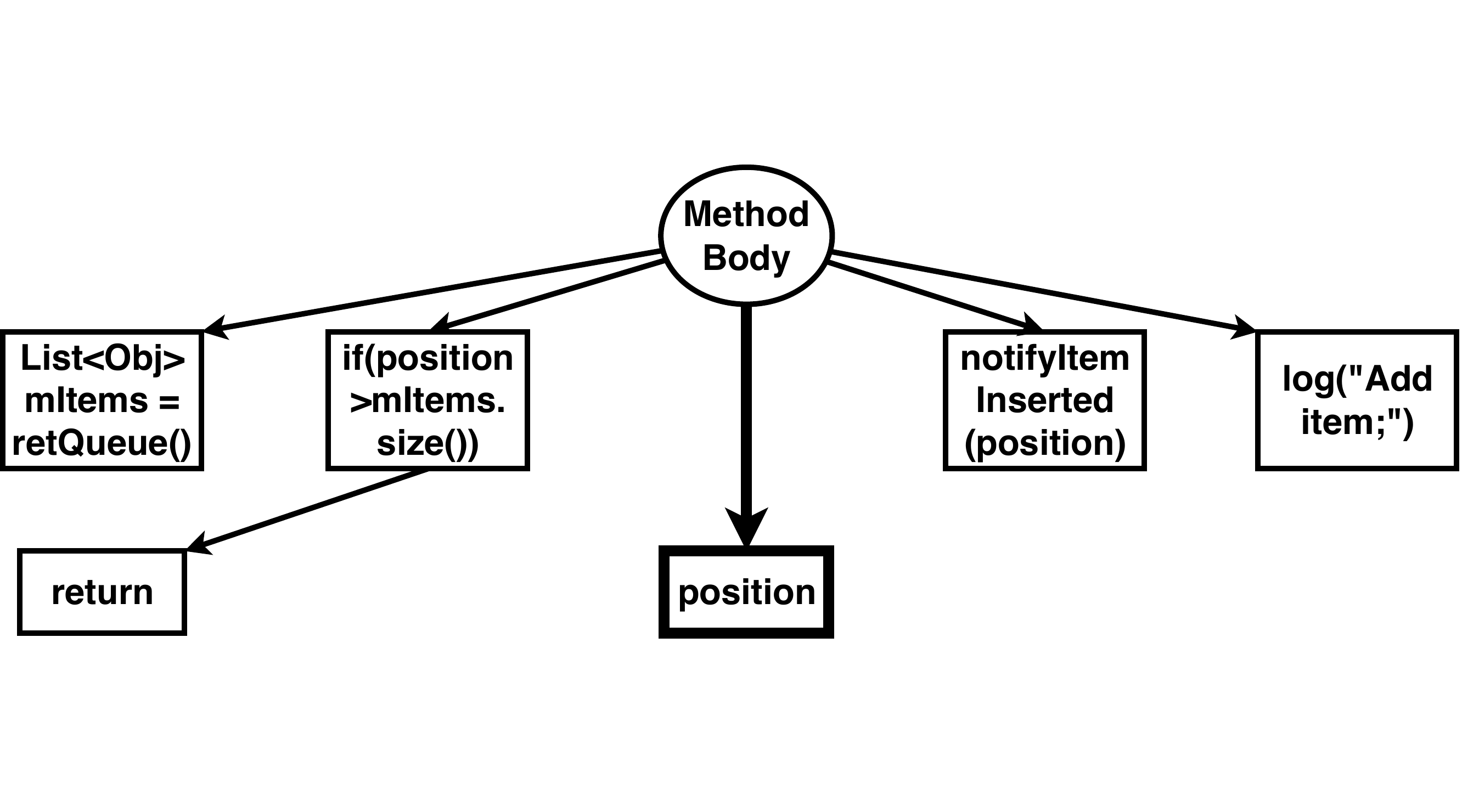}}
		
		\caption{}
		\label{fig:subtraction4}
	\end{subfigure} 
	
	\caption{A illustration of the four-step process of subtracting \texttt{mItems.add(genItem);} from the original program. (1) Figure~(\subref{fig:subtraction1}) presents the AST of the running example, and the AST of \texttt{mItems.add(genItem);} in the bottom left corner. For brevity, the ASTs are simplified. (2) Figure~(\subref{fig:subtraction2}) highlights the overlapping nodes between the two ASTs. (3) Figure~(\subref{fig:subtraction3}) emphasizes the resultant AST after the overlapping nodes are removed, as a result, \texttt{position} becomes a dangling node. (4) Figure~(\subref{fig:subtraction4}) connects \texttt{position} to the body of the method as the completion of the subtraction operation.}
	\label{fig:subtraction}
\vspace*{-10pt}
\end{figure}

Figure~\ref{fig:mutateprocedure} illustrates the procedure of the \textit{Mutate} step. As depicted in \textit{Step} \textbf{\circled{1}} and \textbf{\circled{2}}, it turns out that neither removing \texttt{mItems} nor mutating \texttt{mItems} into \texttt{oov} satisfies the \textit{sufficient} requirement. Similarly, we find that the identifier \texttt{add} is also integral to \motimodel's prediction that can not be changed. Next, we move our attention to the first parameter \texttt{position}. It turns out that \texttt{mItems.add(genItem());} satisfies both the \textit{sufficient} (\textit{Step} \textbf{\circled{3}}) and \textit{necessary} (\textit{Step} \textbf{\circled{4}}) requirement, thus, we remove \texttt{position} from the code fragment. Regarding the subtraction operation (Definition~\ref{def:subtra}) invoked for checking code against the \textit{necessary} requirement, directly removing a subtree from an entire AST may result in dangling nodes that do not connect to any other node in the AST (\eg, the identifier \texttt{position} marked in dashed box in Figure~\ref{fig:subtraction3} after we remove the AST of \texttt{mItems.add(genItem());} from that of the original program). In such cases, we simply connect the dangling nodes to the parent node of the root node for the deleted AST (\eg, connecting \texttt{position} to the method body as depicted in Figure~\ref{fig:subtraction4}). Figure~\ref{fig:subtraction} gives a detailed illustration of this procedure. 
Subsequently, we succeed in removing the second parameter \texttt{genItem()} because \texttt{mItems.add();} also satisfies the \textit{sufficient} (\textit{Step} \textbf{\circled{5}}) as well as the \textit{necessary} requirement (\textit{Step} \textbf{\circled{6}}). Note that the comma that separates the two then parameters \texttt{position} and \texttt{getItem()} becomes a semicolon at \textit{Step} \textbf{\circled{6}} because both \texttt{position} and \texttt{getItem()} will be considered as standalone statements in the resultant program of the subtraction of \texttt{mItems.add()} (Figure~\ref{app-fig:subtraction6} in the supplemental material gives a detailed illustration). 
Finally, we conclude that \texttt{mItems.add()} is the \whet that \motimodel uses for predicting the running example. 

\begin{algorithm}[!tb]
	\algnewcommand\Pd{prediction\xspace}
	\algnewcommand\Ml{model\xspace}
	\algnewcommand\OMT{program\xspace}
	\algnewcommand\K{i\xspace}	
	\algnewcommand\Mn{met.name\xspace}		
	\algnewcommand\Me{met\xspace}
	\algnewcommand\Mt{met\_ast\xspace}
	\algnewcommand\Md{fragments\xspace}
	\algnewcommand\Ma{min\_ast\xspace}
	\algnewcommand\Fe{features\xspace}
	\algnewcommand\Ne{node\xspace}
	\algnewcommand\Tn{target\_node\xspace}
	\algnewcommand\Fg{flag\xspace}
	\algnewcommand\Nt{node.children\xspace}
	\algnewcommand\Nz{node.children.size\xspace}	
	\algnewcommand\Ni{node.children[i]\xspace}	
	\algnewcommand\Na{node.children[node.value[1]]\xspace}	
	\algnewcommand\Nv{node.value\xspace}
	\algnewcommand\No{node.value[0]\xspace}	
	\algnewcommand\Nn{node.value[1]\xspace}	
	\algnewcommand\Ve{value\xspace}
	\algnewcommand\Pl{oov\xspace}		
	\algnewcommand\Rt{root\xspace}
	\algnewcommand\Ry{root.copy\xspace}	
	\algnewcommand\Ss{s\xspace}
	\algnewcommand\Ch{child\xspace}
	\algnewcommand\Cs{replace\xspace}
	\algnewcommand\Rv{replace.value\xspace}
	\algnewcommand\Cv{child.value\xspace}
	\algnewcommand\Cz{child.value[0]\xspace}	
	\algnewcommand\Co{count\xspace}
	\algnewcommand\Ps{pos\xspace}	
	\algnewcommand\Vo{oov\xspace}	
	\algnewcommand\Mr{min\_root\xspace}	
	\algnewcommand\NB{node\_bck\xspace}	
	\algnewcommand\FMN{MutateNode\xspace}	
	\algnewcommand\FDN{DeleteNode\xspace}	
	\algnewcommand\FIN{InheritNode\xspace}	
	\algnewcommand\NewV{new\_value\xspace}	
	\algnewcommand\Ncv{node.children[0].value\xspace}	
	\algnewcommand\IV{IsValid\xspace}	
	\algnewcommand\CR{CheckRequirements\xspace}	
	\algblockdefx[Foreach]{Foreach}{EndForeach}[1]{\textbf{foreach} #1 \textbf{do}}{\textbf{end foreach}}
	
	\caption{Find the \whet.}
	\label{alg:keyfeasimp}
	\begin{algorithmic}[1]
		\item[]
		
		\Procedure{FindFeatures}{$\Md, \OMT, \Ml$}
		\State \Mr $\gets$ \textit{Parse}\!\:(\OMT).\Rt   		\Comment{ \Mr is defined to track the minimal features}
		
		\LineComment{\hspace*{0.82em} \(\triangleright\)  {\Md} is a list of the minimum fragments returned 
			from the \textit{Reduce} step}
		\Foreach{min\_mth $\in$ \Md}
		\State \Rt $\gets$ \textit{Parse}\!\:(min\_mth).\Rt \label{line:parseast}		
		\State \Call{MutateToFixpoint}{\Rt, \OMT, \Ml}  \label{line:fixedpoint} 
		\State \Mr $\gets$  \textit{Min}\!\:({\Rt, \Mr}) \Comment{return the minimum of two code features}	
		\EndForeach
		\State	\Return{\Mr}
		\EndProcedure
		\item[]
		\Function{MutateToFixpoint}{$\Rt, \OMT, \Ml$}
		
		\State last\_root $\gets$ $\emptyset $
		\While{last\_root $\neq$ \Rt} \label{line:fixbegin} 
		\State last\_root $\gets$ \Rt 	
		\State \Call{Mutate}{\Rt, \Rt, \OMT, \Ml} \label{line:mutate}
		\EndWhile \label{line:fixend} 
		
		\EndFunction
		
		\item[]
		
		\Function{Mutate}{$\Ne, \Rt, \OMT, \Ml$} \label{line:fixedpointimp}
		\Foreach{\Ch $\in$ \Nt} \label{line:mutfor1} 
		\State \Call{Mutate}{\Ch, \Rt, \OMT, \Ml} \label{line:post}	
		\EndForeach \label{line:mutfor2}
		
		
		\If{\Nz == 0} \label{alg:mut:count0begsimp}
		\If{\textbf{not} \Call{\FDN}{\Ne, \Rt, \OMT, \Ml}}  \label{line:wrap1}\Comment{removing {node} }
		\LineComment{\ \ \ \  \ \ \ \  \ \ \ \   \(\triangleright\)  if fails, mutating {node}} 
		\State \Call{\FMN}{\Ne, \Rt, \OMT, \Ml}\label{line:wrap2} 
		
		\EndIf \label{alg:mut:count0endsimp}
		\EndIf  \label{alg:mut:count1end}
		
		\EndFunction
	\end{algorithmic}
\end{algorithm}

Algorithm~\ref{alg:keyfeasimp} gives the details about how to pinpoint the \whet within the minimum code fragment discovered by the \textit{Reduce} step. 
Technically,
we perform a postorder traversal on the AST (line~\ref{line:parseast}) of the minimum fragments. For each node operation in the AST, we consider the following two cases. If a node does not have any child node, then we first try deleting the node (line~\ref{line:wrap1}). If the resultant program no longer satisfies both the \textit{sufficient} and \textit{necessary} requirement, we will mutate the node into one with an out-of-vocabulary value (line~\ref{line:wrap2}). 
If a node has children nodes, it will be kept intact to preserve the status of its children (the \texttt{if} condition at line~\ref{alg:mut:count0begsimp} will be evaluated to \texttt{false}). 
If by any chance a node happens to be irremovable (\resp, immutable), we simply skip the deletion (\resp, mutation) operation. 
This entire process is repeated until the AST of the code fragment can not be reduced any further (line~\ref{line:mutfor1} to~\ref{line:mutfor2}). Considering some expressions, which can not be removed or mutated before, may become removable or mutable after others are dealt with first, thus, we compute a fixed point on the result of the \textit{Mutate} function to make certain the final output will be minimal. (lines~\ref{line:fixbegin} to~\ref{line:fixend}).\footnote{Interested readers may refer to Section~\ref{app-app:alg} in the supplemental material for the details of \texttt{DeleteNode} and \texttt{MutateNode} in Algorithm~\ref{alg:keyfeasimp}.}

\vspace*{3pt}
\noindent
\textbf{\textit{Minimality}}\, Although the result of Algorithm~\ref{alg:keyfeasimp} may not satisfy the 
\textit{minimum} requirement in Definition~\ref{def:keyfea}, it satisfies \textit{1-minimality}, or more precisely \textit{1-tree-minimality}~\cite{hddpro} that DD no longer satisfies due to the non-monotony of machine learning models. In addition, we show in Section~\ref{subsec:bfs} that even without the guarantees in theory, \textit{Reduce} and \textit{Mutate} is in fact remarkably effective in finding the global minimum features in practice.

\vspace*{3pt}
\noindent
\textbf{\textit{Compilability}}\, We can easily adjust \textit{Reduce} and \textit{Mutate} to accommodate code models that do require input programs to compile. For example, we can filter out uncompilable code with the help of existing compilers in both steps and only use those that compile to query the model. 

\subsection{The Validity of \method's Approach in Light of the Out-of-distribution Issues}
\label{sec:ood}

Since the programs that \method queries a model with at both \textit{Reduce} and \textit{Mutate} step differ from those that the model has seen in training, it seems that \method violates one of the key assumptions
in machine learning: the training and evaluation data come from the same distribution, in which case the predictions that the model makes for the query programs will be unsound, and the validity of \method's approach will be called into serious question. To address this issue, we draw from a seminal work in explainable ML, ROAR~\cite{ROAR}, which proposes a retraining approach for handling the distribution shift when evaluating the explainability methods. Their insight is that a commonly used strategy in estimating the feature importance --- removing the supposedly informative features from the input and observing how the classifier degrades --- comes at a significant
drawback because a performance degradation might be caused by a shift in distribution instead of removal of information. In contrast,~\citet{ROAR} first retrains the model on the modified data (\eg, in image classification domain, they replace the fraction of the pixels estimated to be most important by an attribution method with a fixed uninformative value for each training image) and then evaluates on test images, which are modified in the same manner, if a feature estimator can identify the important input pixels whose subsequent removal causes the sharpest degradation in accuracy. Since retraining makes certain that training and test data comes from a similar distribution,
the removal of important features becomes the only plausible explanation for accuracy degradation.
On the other hand, ROAR is not without limitations. For one, while the architecture is the same, the model used during evaluation is not the same as the model on which the feature importance estimates were originally obtained. 

Inspired by~\citet{ROAR}'s approach, we design a similar retraining procedure that not only tackles the out-of-distribution issue but also addresses the inherent limitation of ROAR. Denoting the model by $M$, its training set by $\mathcal{P}$ and test set by $\mathcal{Q}$, we describe the steps of our approach:

\begin{enumerate}[leftmargin=*]
	\item for each program $P \in \mathcal{P}$, we perform \textit{Reduce} and \textit{Mutate} to generate new programs $\mathbb{P}$ --- what \method would have queried the model with for finding the \whet of $P$.
	
	\item we manually label each program $P \in \mathbb{P}$ before training $M$ on $\mathcal{P} \cup \mathbb{P}$. 
	
	\item we deploy \method to find the \whet that retrained $M$ uses to predict each program $Q \in \mathcal{Q}$.
	
	\item we compare the \whet that \method finds for each program $Q \in \mathcal{Q}$ before and after the retraining of $M$.
\end{enumerate}

Using the \whet produced by step (3), albeit no longer out-of-distribution, creates the same problem: \method is essentially explaining the predictions of the retrained model rather than the original model. Instead, we check the equivalence of the two sets of \whet at step (4), if they turn out to be identical, we declare that the \whet \method finds for each test program $Q$ are indeed the \whet that $M$ uses for prediction; furthermore, the consistency between the two sets of \whet also indicates the retraining procedure does not fundamentally change the model behavior, therefore, \method can explain predictions of a model in its original form (without retraining).

\subsection{An Application of The \Whet}
\label{sec:exam}

We utilize the revealed features code models use to explain their predictions. The idea is as follows. Given a Model $M$, and a program $P$ from $M$'s test set for which $M$ predicts $L$. First, we discover the \whet $M$ uses to predict $L$; Second, we do the same thing for all programs in $M$'s training set with the label $L$. Note that in a generative setting (\eg, code2vec, seq-GNN) where training set might not contain a method of label $L$, we consider all methods whose name is a subset of the words in $L$. Finally, we rank these training programs based on the distance between their and $P$'s revealed features. Our rationale is to find the training programs from which models extracted the \whet at the first place. Thus, our method explains the root cause of a prediction rather than interprets the internal state of models.   
\section{Evaluation}
\label{sec:eva}

We realize our method \textit{Reduce} and \textit{Mutate} in a tool, called \tool, which extracts the \whet that code models use for prediction. In this evaluation, we first deploy \tool to explain several prominent code models. Then we examine the validity of \method's approach. Next, we evaluate if attribution methods can find the \whet that \tool finds. Finally, we evaluate the effectiveness of our \whet-based explanation to code models. 

\subsection{Evaluation Subjects}
\vspace*{1pt}
\noindent
\textbf{\textit{Models.}}\, We choose code2vec~\cite{Alon2019code2vec} because it is among the first to predict method names in large-scale, cross-project settings. We choose \motimodel since it's the state-of-the-art model in method name prediction. GGNN~\cite{allamanis2018learning} is selected because it is the first graph model applied in the programming domain for variable misuses detection. 
We also include CodeBERT~\cite{feng-etal-2020-codebert}, a bimodal pre-trained model for programming language and natural language. It achieves state-of-the-art performance on both natural language code search and code documentation generation, the latter of which is used in this evaluation.

\vspace*{3pt}
\noindent
\textbf{\textit{Datasets.}}\, For all models except code2vec, we use the datasets on which they are evaluated when they are first proposed. That is Java-small for \motimodel; a dataset of 2.9 million lines of real-world C\# code for GGNN; and CodeSearchNet~\cite{husain2019codesearchnet} for CodeBERT. To keep our engineering effort manageable, we only use Java programs in CodeSearchNet (496,688 in total), which should be sufficient for validating \tool's effectiveness.
Since Java-med, and Java-large share similar characteristics with Java-small except for their bigger size, and code2vec and \motimodel target the same downstream tasks, we use Java-small, Java-med, and Java-large to evaluate both models.
We have re-trained all models using their implementations open-sourced on GitHub. The performance of all re-trained models is either comparable or superior to the original (Table~\ref{app-tab:com1}-\ref{app-tab:com3} in the supplemental material). Note that we only use programs in test sets as the evaluation subjects in case models over-fit to training sets.

\vspace*{3pt}
\noindent
\textbf{\textit{Baseline.}}\, We use DD as a baseline method; in particular, we apply DD to remove code that is irrelevant to a prediction (determined by the \textit{sufficient} or \textit{necessary} requirement). In the end, we declare the program that can not be further removed as the \whet.  
%

\vspace*{3pt}
\noindent
\textbf{\textit{Hardware.}}\, All experiments are conducted on 5 Red Hat Linux servers each having 64 Intel(R) Xeon(R) 2.10GHz CPU, 755GB RAM and 4 Tesla V100 GPU (32GB GPU memory).

\subsection{Performance of \tool{}}
\label{subsec:effic}

We evaluate the performance of \method by the time \tool spends generating explanations end-to-end (\ie, including the time models spend on prediction). The results show that the two methods are neck and neck in practice even though DD is more efficient than \tool in terms of the worst-case complexity (\ie, quadratic time \textit{vs.} exponential time). This is due to the size of the minimum code fragment \tool finds, which never exceeds three statements for over a million programs used in the evaluation. Furthermore, since the minimal fragment of three statements rarely occurs, DD would not display a significant upgrade. To conclude, \tool is efficient: taking on average less than twenty seconds to compute the \whet. 

\begin{table}[t]
		\captionsetup{skip=1pt}
		\caption{Avg. end-to-end time for \tool and the baseline. Hereinafter, stronger results are marked in bold.}
		\centering
		\adjustbox{max width=1\textwidth}{
			\begin{tabular}{c|c|c|c|c|c|c|c|c}
				\hline
				\multirow{2}{*}{Methods} & \multicolumn{3}{c|}{code2vec} &  \multicolumn{3}{c|}{seq-GNN}  & GGNN & CodeBERT \\\cline{2-9}
				& Java-small & Java-med & Java-large & Java-small & Java-med & Java-large & C\# Datasets & CodeSearchNet  \\\hline
				Baseline & 26.45 & 18.24 &  23.28 & \textbf{7.28} 	&	12.78	& \textbf{5.34}  & \textbf{5.21} & 16.37 \\\hline
				\tool & \textbf{19.76}	 &	\textbf{14.37}	 &	\textbf{17.71}  &  10.57	 &	\textbf{12.07}	 &	9.79 & 7.54 &  \textbf{11.23}  \\\hline				
		\end{tabular}}
		\label{tab:etetpnew}
\vspace*{-5pt}
\end{table}


\subsection{Makeup of the \Whet}
\label{subsec:setsta}

\noindent
\textbf{\textit{Finding the \Whet}}\,  We follow Algorithm~\ref{alg:findregionsimp} and \ref{alg:keyfeasimp} to identify the minimum code fragments and pinpoint the fine-grained features within these fragments. Table~\ref{tab:sizemutnew} gives the average number of tokens the \whet is composed of. 
Overall, we find that none of the \whet that any evaluated model uses for prediction exceed
fifteen tokens. This result indicates that existing code models use simple program properties for prediction. Additionally, the \whet generated with those properties, albeit not guaranteed to be the global minimum, still help end users to know the \whet that models use for predictions. Table~\ref{tab:sizemutnew} also gives a comprehensive, head-to-head comparison between \tool and DD. Clearly, \tool is better across the board. A deeper analysis reveals that
for less than 9\% of all test programs, DD manages to find the same \whet as \tool, while \tool finds smaller \whet for the remaining over 91\%. 
In other words, DD never finds smaller \whet than \tool. 
This is a concrete piece of evidence that 
the degree to which non-monotony impacts \tool is significantly less than that impacts DD. 

\begin{table}[h]
	\begin{minipage}{1\textwidth}
		\captionsetup{skip=1pt}	
		\caption{The average number of tokens in the \whet.}
		\centering
		\adjustbox{max width=1\textwidth}{
			\begin{tabular}{c|c|c|c|c|c|c|c|c}
				\hline
				\multirow{2}{*}{Methods}  & \multicolumn{3}{c|}{code2vec} &  \multicolumn{3}{c|}{seq-GNN}  & GGNN & CodeBERT \\\cline{2-9}
				& Java-small & Java-med & Java-large & Java-small & Java-med & Java-large & C\# Datasets & CodeSearchNet  \\\hline
				Baseline & 14.21 & 17.13  & 16.24 & 12.30 & 14.84 & 10.04 & 14.22 & 13.63 \\\hline
				{\tool}  & \textbf{8.53} & \textbf{9.91}  & \textbf{9.58} & \textbf{6.74} & \textbf{6.79} & \textbf{5.66} & \textbf{6.81} & \textbf{6.78}\\\hline
		\end{tabular}} 
		\label{tab:sizemutnew}
	\end{minipage}
\vspace*{-5pt}
\end{table}

\begin{figure}[t!]
	\centering
	\captionsetup{skip=3pt}
	
	\begin{subfigure}{0.20\textwidth}
		\lstset{style=mystyle}
		\lstinputlisting[morekeywords={Object, String}]{code/init-ori.cs}
	\end{subfigure}
	\,
	\begin{subfigure}{0.35\textwidth}
		\lstset{style=mystyle}
		\lstinputlisting[morekeywords={Object, String}]{code/sort-ori.cs} 
	\end{subfigure}    
	
	\vspace{-4.8pt}

	\begin{subfigure}{0.20\textwidth}
		\lstset{style=mystyle}
		\lstinputlisting[morekeywords={Object, String}]{code/init-seed.cs}
		\label{fig:explanlex1}
	\end{subfigure}
	\,  
	\begin{subfigure}{0.35\textwidth} 
		\lstset{style=mystyle}
		\lstinputlisting[morekeywords={Object, String}]{code/sort-seed.cs}
		\label{fig:explanlex2}
	\end{subfigure} 
	
	\caption{Lexical \whet. Top (\resp, bottom) figures are the original methods (\resp, \whet).}	\label{fig:explanlex}
	
\vspace{-5pt}
	
\end{figure}

\begin{figure}[tb!]
	\centering
	\captionsetup{skip=4pt}
	
	\begin{subfigure}{0.26\textwidth}
		\lstset{style=mystyle}
		\lstinputlisting[morekeywords={Object, String}]{code/checkDone-ori.cs}
	\end{subfigure}   
	\,
	\begin{subfigure}{0.30\textwidth}
		\lstset{style=mystyle}
		\lstinputlisting[morekeywords={Object, String}]{code/nextInt-ori.cs}
	\end{subfigure}

	\vspace{-4.8pt}
	
	\begin{subfigure}{0.26\textwidth}
		\lstset{style=mystyle}
		\lstinputlisting[morekeywords={Object, String}]{code/checkDone-seed.cs}
		\label{fig:explansyn3} 
	\end{subfigure}
	\,
	\begin{subfigure}{0.30\textwidth} 
		\lstset{style=mystyle}
		\lstinputlisting[morekeywords={Object, String}]{code/nextInt-seed.cs}
		\label{fig:explansyn4}
	\end{subfigure}
	\caption{Syntactic \whet.}
	\label{fig:explansynt}
	
	\vspace{-5pt}
	
\end{figure}

\vspace*{3pt}
\noindent
\textbf{\textit{Investigating the Substance of the \Whet.}}\, We classify the \whet into three categories according to their constituent program properties: lexical, syntactic, and semantic.

\begin{itemize}[leftmargin=*]
	\item \textbf{Lexical:} If each statement in the \whet consists of a single identifier. Figure~\ref{fig:explanlex} depicts two examples, in both cases, the \whet consist of only one identifier.
	\item \textbf{Syntactic:} If there is at least one statement in the \whet composed of a syntactic expression (Figure~\ref{fig:explansynt}).
	\item \textbf{Semantic:} \motimodel and GGNN are the two models which take in semantic properties as model inputs. In particular, they use nine kinds of manually designed edges (described in Section~\ref{app-app:edges} of the supplemental material), out of which seven can be deemed as semantic in nature, to enrich the original AST of input programs. Therefore, we define the \whet to be semantic if its AST is augmented with at least one semantic edge. Take the method \texttt{compare} in Figure~\ref{fig:explanSemantics} as an example, the three semantic edges --- one \texttt{LastWrite} edge and two \texttt{ComputeFrom} edges --- must be present in the AST of the \whet, otherwise the \textit{sufficient} and \textit{necessary} requirement are no longer satisfied at the same time. Thus, we classify this example as semantic. To determine whether \motimodel and GGNN used semantic properties in their \whet, we first run \textit{Reduce} and \textit{Mutate} as usual, then we remove all semantic edges from the augmented AST of the obtained \whet. If the resulted tree no longer satisfies both the \textit{sufficient} and \textit{necessary} requirement, we categorize the \whet to be semantic.
\end{itemize}

\begin{figure}[tb!]
	\centering
	
	\,\,
	\begin{subfigure}{5.4cm}
		\lstset{style=mystyle}
		\lstinputlisting[morekeywords={Object, String}]{code/writeArrayToFile-ori.cs}
	\end{subfigure}
	\,
	\begin{subfigure}{5.4cm}
		\lstset{style=mystyle} 
		\lstinputlisting[morekeywords={Object, String}]{code/compare-ori.java}
	\end{subfigure}
	
	\hspace*{-6pt}	
	\setlength\fboxrule{0.4pt}
	\begin{subfigure}{5.4cm}
		\centering
		\fbox{\includegraphics[height=5.517cm,width=5.4cm]{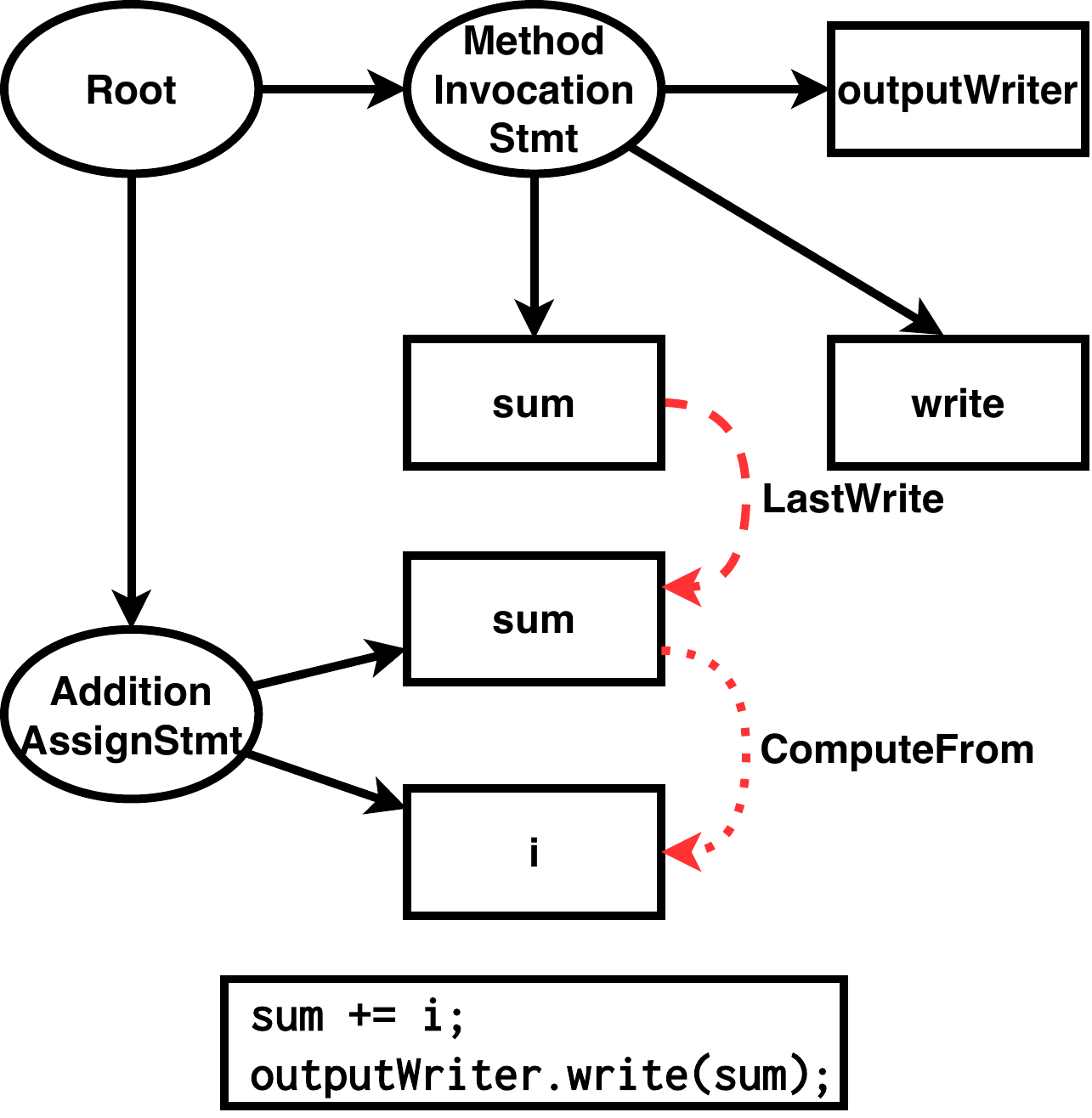}}
	\end{subfigure}
	\,
	\begin{subfigure}{5.4cm}
		\centering
		\fbox{\includegraphics[height=5.517cm,width=5.3875cm]{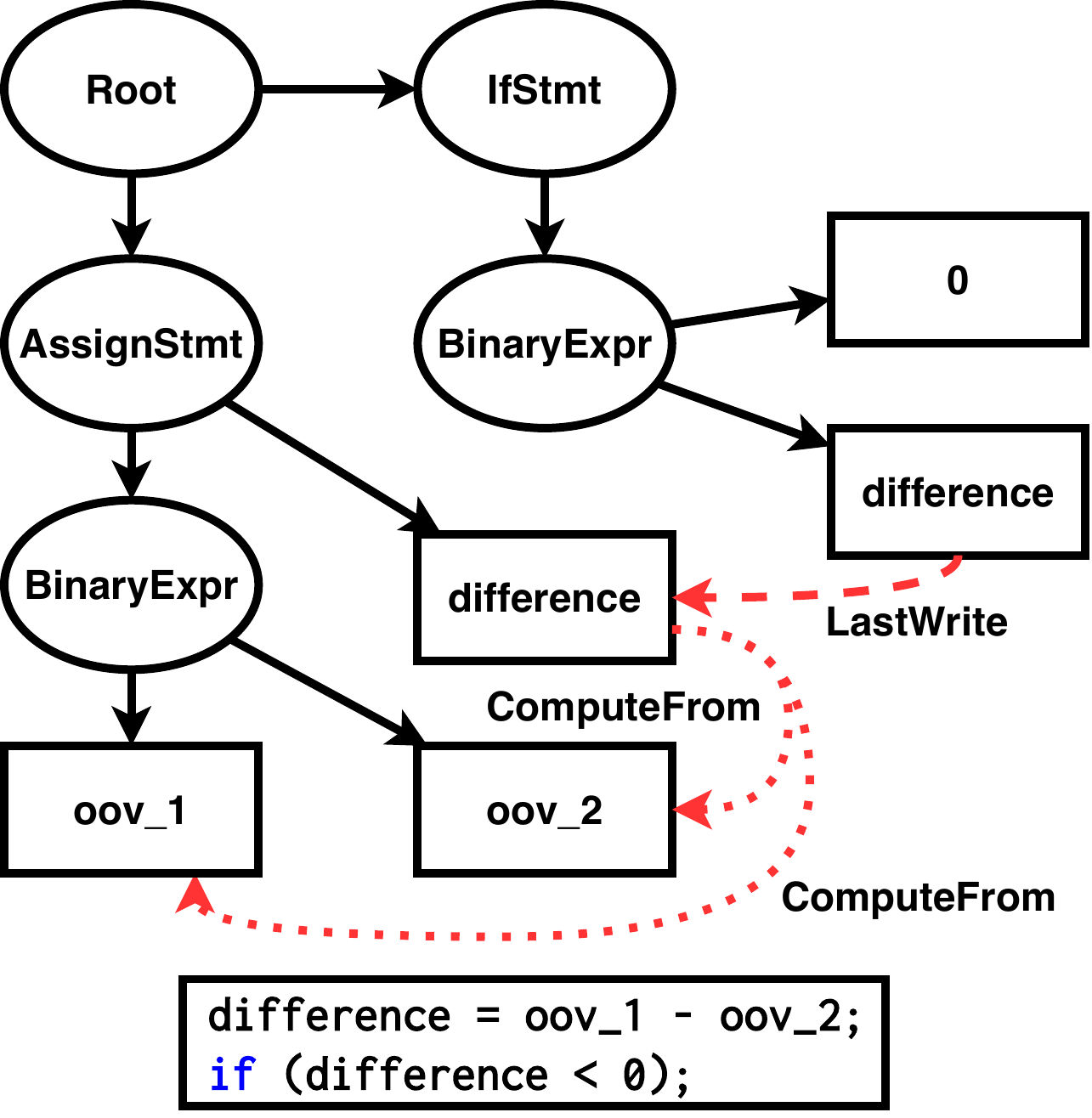}}
	\end{subfigure} 
	
	\caption{Semantic \whet. The bottom figure shows the AST of the \whet, in which dash arrows denote the semantic edges. For clarity, the AST is simplified.}
	\label{fig:explanSemantics}
	\vspace{-3pt}
\end{figure}

Table~\ref{tab:lssperc} gives detailed statistics on the classification for each model. Clearly, for all evaluated models, the \whet primarily consists of lexical properties.

\vspace{-3pt}
\begin{table*}[h]
	\captionsetup{skip=1pt}
	\caption{Types of the \whet that each model uses for prediction.}
	\centering
	\adjustbox{max width=0.95\textwidth}{
		\begin{tabular}{c|c|c|c|c|c|c|c|c}
			\hline
			\multirow{2}{*}{Types} & \multicolumn{3}{c|}{code2vec} &  \multicolumn{3}{c|}{seq-GNN}  & GGNN & CodeBERT \\\cline{2-9}
			& Java-small & Java-med & Java-large & Java-small & Java-med & Java-large & C\# Datasets & CodeSearchNet  \\\hline
			Lexical & 59.27\% &  51.34\% & 50.22\% & 77.60\% & 64.39\% & 71.25\%  & 53.39\%  & 49.17\%  \\\hline
			Syntactic & 40.73\%& 48.66\%   & 49.78\%& 21.73\%& 32.96\% & 26.24\% & 46.12\% & 50.83\%\\\hline
			Semantic & -- & --  & --& 0.67\% & 2.65\% & 2.51\% & 0.49\% & -- \\\hline 
	\end{tabular}}
	\label{tab:lssperc}
\end{table*}

\subsection{Confirm the Validity of \method in Light of the Out-of-distribution Issues}
Based on the high-level steps introduced in Section~\ref{sec:ood}, we present the technical details from two key aspects: data generation and labeling. 

\vspace*{3pt}
\noindent
\textbf{\textit{Data Generation.}}\, First, we perform the \textit{Reduce} operation to generate code fragments for each program in the training set.
Then, for each code fragment, we generate additional programs by randomly removing or mutating its constituent expressions. Such programs resemble the code produced out of the \textit{Mutate} step.

\vspace*{3pt}
\noindent
\textbf{\textit{Data Labeling.}}\, 
Given the amount of work, we attempt to recruit all undergraduate, master and PhD students from Math, Physics, Chemistry, Electronic Engineering, and Computer Science department at our university for the labeling task. In the end, we find 458 students in total and each student has at least one year of programming experience. We also hired 84 professional developers via Amazon Mechanical Turk. Here, we describe the set-up of this experiment in detail. First, for each generated program, we assign it the label of the program from which the \whet is derived. Second, we ask participants to confirm the assigned label for each generated program. Specifically, participants are required to assess based on their intuition if the assigned label correlates to the generated program. An example we gave to all participants is that the \whet of the running example correlates well to the label \texttt{addItem}, therefore, the label should be accepted for the \whet as a stand-alone program. To ensure the quality of labeling, we present every generated program with the assigned label to two participants, and we will only approve the program if both agree to accept the label, otherwise, the program is rejected. We also assign all rejected programs a new label that has not appeared in the original training set. 


We randomly select either training or validation set to add each approved program. We keep the same ratio between the added training programs and validation programs as the original dataset. To keep the added data balanced, we also randomly sample a rejected program to accompany each approved program. 
Table~\ref{tab:augnum} shows the number of programs we add including both the confirmed and rejected into Java-small, Java-med, C\# Datasets, and CodeSearchNet. We disregard Java-large because it demands excessive human effort to label enough programs given the size of its training set (more than 15 million programs). In addition, Java-small and Java-med should already contain enough data points to validate model behavior, in turn, \method's approach.

\begin{table*}[t]
		\captionsetup{skip=1pt}
		\caption{The number of programs that we add into each dataset for retraining. 
		}
		\centering
		\adjustbox{max width=1\textwidth}{
			\begin{tabular}{l|c|c|c|c}
				\hline
				& Java-small & Java-med &C\# Datasets & CodeSearchNet   \\\hline
				\#original programs -- training    &  665,115 & 3,004,536 & 130,101 & 454,451 \\
				\#added programs  -- training     &  336,938 & 1,106,288 & 46,862 & 267,922 \\\hline
				\#original programs -- validation    &  23,505 & 410,699 & 21,594 & 15,328 \\
				\#added programs -- validation     &  11,908 & 151,222 & 7,778 & 9,036 \\\hline
			\end{tabular}}
		\label{tab:augnum}
\vspace{-5pt}
\end{table*}

\vspace*{3pt}
\noindent
\textbf{\textit{Results.}}\, 
First, we validate the accuracy of all evaluated models on the augmented datasets. We find that all retrained models become more accurate even though the increase of accuracy is negligible (Table~\ref{app-tab:aug1}-\ref{app-tab:aug3}  in the supplemental material). We deploy \method to explain the retrained models, and then compare the two sets of \whet before and after the retraining. It is worth mentioning we also set out to validate Sivand's approach~\cite{rabin2021understanding} using the same original and retrained models. Considering the manually labeled programs also resemble those with which Sivand would have queried the model, thus, in the same way we can evaluate if 
Sivand's approach is still valid after the distribution shift between the training and test data is erased. Table~\ref{tab:augComp} presents the results. Regarding \method, we find that an overwhelming majority of the programs in every dataset has the same \whet before and after the retraining (the row designated by ``Matched''), confirming that those are indeed the \whet models use for prediction with or without the seemingly distribution shift. ``Related'' represents the \whet models use before and after the retraining come from the same code fragment. They differ at the level of expressions, for example, using retrained models, \method manages to reduce more features from the previous \whet at the \textit{Mutate} step or vice versa. ``Distinct'' means retrained models use entirely different \whet than the original model. Apparently, this raises issues despite the tiny number of programs that falls into this category. To dig deeper, we discover that all such \whet met the \textit{sufficient} and \textit{necessary} requirement before, only to get discarded due to the violation of the \textit{minimum} requirement. The reason that they become the \whet for retrained models is simple:
the then \whet that original models use gets bloated after the retraining (\ie, \textit{Mutate} operation could not reduce so many features as before), making \whet the minimum. Nevertheless, the ``Distinct'' (as well as the ``Related'') scenario does not invalidate \method's approach considering the change of model behavior, to a minor degree, is to be expected after retraining. Overall, we conclude \method's approach is valid in light of the out-of-distribution issues. 

On the contrary, Sivand's explanations change significantly after the retraining. In particular, only around 40\% of the \whet that retrained models use are the same as those that original models use. There is also a large number of programs on each dataset that falls into the ``Distinct'' category. Given that Sivand's approach does not have a similar notion of minimality, ``Distinct'' \whet suggests a drastic change in Sivand's explanations after retraining,
Overall, the results strongly indicate the distribution shift heavily influences Sivand.
This is yet another fundamental flaw of~\citet{rabin2021understanding} in addition to ignoring the \textit{necessary} requirement when producing model explanations. 

\vspace*{3pt}
\noindent
\textbf{\textit{Discussion.}}\, A natural question arises: for the original models (without retraining) how is it possible for the \whet, which is far from the data that models have seen during training, still within the learned distribution? We believe it is because of the very intrinsic limitation of models’ that they only learn local, lexical/syntactic features from training programs, In other words, even with a whole program to learn, models still extract patterns from simple parts of the program which the \whet is likely to resemble, as a result, they lie in the distribution that models learned.

\begin{table*}[t]
		\captionsetup{skip=1pt}
		\caption{The percentage of programs for which \whet are identical (designated by ``Matched''), come from the same code fragment (designated by ``Related''), or entirely different (designated by ``Distinct'') before and after retraining for Sivand and \method.}
		\centering
		\adjustbox{max width=1\textwidth}{
			\begin{tabular}{c|c|cc|cc|c|c}
				\hline			
				\multirow{2}{*}{Approach} & \multirow{2}{*}{Comparison} & \multicolumn{2}{c|}{code2vec} & \multicolumn{2}{c|}{seq-GNN}  & GGNN & CodeBERT \\
				& & Java-small & Java-med & Java-small & Java-med &C\# Datasets & CodeSearchNet   \\\hline
				\multirow{3}{*}{Sivand} & Matched & 46.2\% & 37.8\% & 44.9\% & 41.2\% & 38.4\% & 41.9\% \\
				 & Related & 23.9\% & 30.8\% & 32.5\% & 29.5\% & 26.4\% & 36.7\% \\
				 &  Distinct & 29.9\% & 31.4\% & 22.6\% & 29.3\% & 35.2\% & 21.4\% \\\hline
				\multirow{3}{*}{\method} & Matched & 92.5\% & 96.8\% & 94.8\% & 97.3\% & 93.9\% & 96.5\% \\
				&  Related & 3.6\% & 2.4\% & 2.1\% & 1.7\% & 4.3\% & 2.2\% \\
				 &  Distinct & 3.9\% & 0.8\% & 3.1\% & 1.0\% & 1.8\% & 1.3\% \\\hline
			\end{tabular}}
		\label{tab:augComp}
\vspace{-5pt}
\end{table*}


\subsection{Can \method Find the Global Minimum Features}
\label{subsec:bfs}

In this experiment, we set out to confirm the minimality of the \whet found by \tool. As explained earlier, finding the global minimum \whet requires exhausting all subsequences of the token sequence of the input method. Therefore, to lessen the computational burden, we limit the evaluation data for this experiment to those methods that have either a small set of tokens or small \whet found by \tool (Table~\ref{tab:me}). We also diversify the data \wrt the size of programs and \whet to make our results unbiased. To confirm the global minimality of the \whet for the selected methods, we exhaust all subsequences of the token sequence of the input method up to the size of the \whet, specifically, we start from the subsequences of size one, and gradually increase to the size of the \whet. Note that this is also the way in which we verify the \whet for our running example. To cope with such a heavy computational burden, we exhaustively generate the test candidates for each program beforehand in a parallel fashion; then we place them into separate batches to fully exploit the potential of our GPUs (\ie, $\approx$10K test candidates per second per GPU). The whole experiment takes almost three months to complete. 

We find that for 79,208 \whet that \tool identified, the brute-force search never finds a single instance where the \whet is smaller. Our results speak volume to the precision of our \textit{Reduce} and \textit{Mutate} method in finding the critical features that models use for prediction even if they are not guaranteed to be the global minimum.

\begin{table}[t]
	\captionsetup{skip=1pt}
	\caption{Statistics on the selected methods for the minimality experiment.}
	\centering
	\adjustbox{max width=1\textwidth}{
		\begin{tabular}{c|cccc|cccc|c}
			\hline
			\multirow{2}{*}{Models} & \multicolumn{4}{c|}{\#statements in methods} &  \multicolumn{4}{c|}{\#tokens in \whet}  & \multirow{2}{*}{\#methods in total}  \\
			& Min & Max & Mean & Median & Min & Max & Mean & Median &   \\\hline
			code2vec & 1.0 & 12.0 & 7.4 & 5.0 & 2.0 & 17.0 & 9.4 & 8.0 & 21,324 \\\hline
			code2seq & 1.0 & 15.0 & 7.2 & 6.0 & 2.0 & 16.0 & 10.1 & 9.0 & 18,105 \\\hline
			seq-GNN  & 1.0 & 12.0 & 5.8 & 5.0 & 2.0 & 16.0 & 8.2 & 7.0 & 19,751 \\\hline
			extreme  & 1.0 & 14.0 & 8.1 & 6.0 & 2.0 & 15.0 & 7.5 & 7.0 & 20,028 \\\hline			
	\end{tabular}}
	\label{tab:me}
\vspace{-5pt}
\end{table}

\subsection{Can Other Explainability Methods Find \Whet}

\begin{table*}[t]
	\captionsetup{skip=1pt}
	\caption{Percentage of programs whose \whet are fully covered by explainability methods.}
	\centering
	\adjustbox{max width=1\textwidth}{
		\begin{tabular}{c|ccccc|ccccc|ccccc}
			\hline
			\multirow{2}{*}{Models} & \multicolumn{5}{c|}{Integrated Gradients (Top-N\%)} & \multicolumn{5}{c|}{Attention-based (Top-N\%)}  & \multicolumn{5}{c}{SHAP (Top-N\%)}  \\
			& 10\% &  30\% & 50\% & 70\% & 90\% & 10\% &  30\% & 50\% & 70\% & 90\% & 10\% &  30\% & 50\% & 70\% & 90\%  \\\hline
			code2vec    & 8.23 & 11.42 & 13.85 & 21.69 & 27.93  & 52.50 & 60.00 & 62.50  & 64.91 & 69.22  & 4.38  & 5.02  & 6.37   & 15.24 & 55.19  \\\hline
			seq-GNN     & 1.74 & 6.32 & 10.17  & 17.90 & 25.54  & 41.36 & 45.81 & 51.27  & 56.92 & 60.14 & 6.12  & 9.33  & 12.02  & 38.24 & 60.58 \\\hline
			GGNN    & 1.95 & 4.56 & 9.23   & 15.68 & 23.18 & 38.23 & 42.32 & 52.53  & 57.21 & 61.21 & 4.72  & 7.64  & 11.72   & 21.45 & 65.23\\\hline
			CodeBERT     & 6.23 & 10.33 & 13.12  & 20.90 & 29.50  & 41.97 & 48.44 & 52.02 & 58.19 & 63.12 &6.20 & 11.54 & 15.67 & 27.30 & 73.56 \\\hline
	\end{tabular}}
	\label{tab:cpall}
\vspace{-5pt}
\end{table*}

In this experiment, we evaluate whether or not some of the most prominent attribution methods can also find the \whet that \tool finds. We choose Integrated Gradients~\cite{Sundararajan2017} and SHAP~\cite{LundbergNIPS2017}, both of which have been widely used for explaining the predictions of image and natural language models. 
Worth noting that the two methods are typically contrastive (\ie, they account for deviations from a baseline), we design a baseline in which the embedding for each token (or node) is set to 0. This is the conventional approach followed by the explainability literature. Additionally, we choose the attention mechanism~\cite{bahdanau2014neural}, which makes models pay greater attention to certain factors (\eg, elements in an input) when processing the input data. Therefore, the features that are heavily attended to can be deemed as an explanation naturally. In theory, the three explainability methods may find the \whet that \tool does not, therefore, we only use 
the 79,208 programs whose \whet are already verified in Section~\ref{subsec:bfs}.


First, we pick input features with top-$N$\% highest attribution scores computed by the method. Then, we show the percentage of the programs that are fully covered by those input features (Table~\ref{tab:cpall}).
Quite unexpectedly, the attention mechanism turns out to be the top performer in this experiment. Up to the features that receive the top-70\% attribution scores, attention beats the other two methods by a significant margin across all evaluated models. Another interesting observation we made about the attention is it can already cover a good amount of programs when only using the top-10\% of the input features and the increase of the coverage slows down when more features are selected. This indicates the important role the attention plays in helping models to learn the right features for prediction. As for the other two methods, SHAP's advantage over Integrated Gradients only stands out after the 50\% mark, otherwise, their numbers are in the same ballpark. Overall, it's evident that none of those explanation methods are good at finding the \whet. 
This is an expected outcome because the criteria by which \method evaluates the importance of input features is fundamentally different than the above explanation methods. Specifically, \method uses \textit{sufficient}, \textit{necessary} and \textit{Minimum} requirement whereas
the other three methods quantify the influence of features in mathematical terms (\eg, gradients, SHAP values, or attention scores).

\subsection{Explaining the Predictions of Code Models}
\label{subsec:ustu}

We conduct a user study to evaluate our technique for producing explanations for the prediction of all four models. As described in Section~\ref{sec:exam}, our technique finds training methods similar to a test method based on the AST distance between their \whet. We also adopt a simple baseline that searches for training methods based on the AST distance of the entire method. 

\begin{figure*}[thb!]
	\centering
	\captionsetup{justification=centering}
	
	\begin{subfigure}{0.308\textwidth}
		\lstset{style=mystyle}
		\lstinputlisting[morekeywords={Object, String}]{code/reverse-ori.java}
		\caption{A test method.}
		\label{fig:ori}
	\end{subfigure}
	\,\,
	\begin{subfigure}{0.35\textwidth}
		\lstset{style=mystyle}
		\lstinputlisting[morekeywords={Object, String}]{code/reverse-our.java}
		\caption{The closet found by our technique.}
		\label{fig:pat}
	\end{subfigure}
	\,\,
	\begin{subfigure}{0.302\textwidth}
		\lstset{style=mystyle}
		\lstinputlisting[morekeywords={Object, String}]{code/reverse-baseline.java}
		\caption{The closet found by baseline.}
		\label{fig:base}
	\end{subfigure}	
	
	\caption{An example prediction produced by our technique and baseline.}
	\label{fig:explan}
\end{figure*}

\begin{figure}[th]
	\centering
	\begin{tcolorbox}[enhanced,width=\linewidth,size=fbox,colback=gray!5,
		fontupper=\small,drop shadow southwest,sharp corners]
		\begin{enumerate}[leftmargin=*]
			\item  The system finds training examples that have similar structure (or substructure) to the test example.
			\vspace{3pt}
			\item  Given the similarity of the structure (or substructure), explanations that the system produce are accurate.
			\vspace{3pt}
			\item  As a user of the model, I find the system helpful in provide an insight of how models work. 
			\vspace{3pt}
			\item  As a user of the model, I feel comfortable using this model along with the system providing the explanations.
			\vspace{3pt}
			\item  As a designer of the model, I find the system helpful in debugging models' mispredictions and improves their quality.			
		\end{enumerate}
		
	\end{tcolorbox}
	\caption{Questionnaire used in the study.}	
	\label{fig:questionnaire}
\end{figure}

\vspace*{4pt}
\noindent
\textbf{\textit{Procedure.}}\, Each participant is given two sets of methods. One set contains 4 methods randomly selected from the test set of the three datasets, and each method is accompanied with 5 methods found to the closet by our technique on the corresponding training set. 
As an example, we show a test method (Figure~\ref{fig:ori}) accompanied by a training method (Figure~\ref{fig:pat}) that is the closest found by our technique. We highlight their \whet in the shadow box. Similarly, the other set presents the same content except that the 5 methods are found by the baseline approach. An example is shown in Figure~\ref{fig:base}. After reviewing a set of methods, each participant is given a set of questions to answer. The questionnaire is depicted in Figure~\ref{fig:questionnaire}. For each question, participants are given 5 choices to make: strong disagree, disagree, neutral, agree, and strongly agree. Each choice is interpreted as a score starting from 1 for strongly disagree all the way to 5 for strong agree. We also conduct an interview after the study during which we encourage them to give any comments/suggestions that they may have.

\vspace*{4pt}
\noindent
\textbf{\textit{Participants.}}\, We have recruited 32 data scientists from a high-tech company through an internal email thread and paid them \$50 each for participating the study. Each participant has at least one year of Java programming experience. All of them have at least two years of experience in developing machine learning models at work. We have explained to each participant beforehand that our technique finds the training programs from which models learned their parameters for predicting the name of a particular test method, and they are asked to rate the quality of the explanations.

\vspace*{4pt}
\noindent
\textbf{\textit{Results.}}\, Table~\ref{tab:questin} shows the results of the questionnaire. In particular, each row contains the average score to a question for our technique and baseline approach. It is clear that participants have rated our technique consistently higher than the baseline approach by a notable margin. In terms of the actual score, participants have given more than 3 to all questions indicating an overall positive attitude toward our technique. Specifically, for Questions 1 and 2, participants are very positive about the accuracy of the predictions found by our technique. For Question 3 and 5, participants also agree that the system would benefit both users and designers of the model by providing the rationale of model predictions. One participant even raises the possibility of performing re-training with examples generated based on the \whet to improve the model accuracy. Question 4 is the only one to which participants reacted a bit more negatively. Therefore, we focus on this question in our post-study interview. 

\begin{table}[t]
	\caption{Participants' responses.}
	\centering
	\adjustbox{max width=1\textwidth}{
		\begin{tabular}{c|c|c}
			\hline
			Question & Our technique &   Baseline  \\\hline
			1 & 4.7  & 2.1  \\\hline
			2 & 4.5  & 1.8 \\\hline
			3 & 3.9  & 1.8  \\\hline
			4 & 3.1  & 1.2 \\\hline
			5 & 3.6  & 1.5 \\\hline
	\end{tabular}}
	\label{tab:questin}
\end{table}

The main message we received from participants who give particularly low scores to this question is that even though the system gives accurate explanations, they won't be very helpful when models only seem to use almost random textual properties to predict. For example, one participant explained that once is explanation is given, albeit having a similar substructure of the test method, users still have to manually examine the rest of the program in order to decide if the explanation can really match the prediction of the test method. Therefore, having a model that learns the right features is also necessary. 

As standard in user studies, we performed a t-test to evaluate the statistical significance of our results. The p-value of a two-tailed
t-test (assuming potentially unequal variance) comparing our technique against the baseline approach is $2 \times 10^{-6}$. This means
the probability that our technique has no influence on classification accuracy is less than 1 in 100,000, indicating our results are statistically significant.

\section{Related Work}
\label{sec:rel}

The work closest to ours is~\citet{rabin2021understanding}, which discovers that input programs can be reduced to significantly smaller code snippets for which models still make the correct predictions. Our work in fact surpasses~\citet{rabin2021understanding} at many levels. Conceptually, we propose \textit{necessary} requirement that \whet has to satisfy in addition to the \textit{sufficient} requirement. Technically, we show DD, which powers Sivand~\cite{rabin2021understanding}, suffers from models' non-monotony. In contrast, our technique \textit{Reduce} and \textit{Mutate} is more precise in identifying the \whet evidenced by our extensive evaluation. Empirically, we demonstrate the validity of \method's approach in light of out-of-the distribution issues, whereas, Sivand is fundamentally flawed in dealing with the distribution shift between the training and query programs. Finally, we also present a novel application of the \whet while~\citet{rabin2021understanding} have not demonstrated any utility of their method. For the remainder of this section, we survey two lines of related work: attribution methods and models of code.

\subsection{Attribution Methods}
In machine learning field, attribution methods are usually classified into two categories: Perturbation-based and backpropagation-based. The former generates explanations by iteratively probing a trained machine learning model with different variations of the inputs. As a few representatives,~\citet{zeiler2011adaptive} visualized the neural activations of individual layers of a deep convolutional network by occluding different segments of the input image and generating visualizations using a deconvolution network (DeConvNet).~\citet{Luisa2017} use a conditional sampling
based multi-variate approach to generate more targeted explanations on image classification CNNs. The Interpretability Randomization Test (IRT) and the One-Shot Feature Test
(OSFT) introduced by~\citet{Collin2020} focuses on discovering important features by replacing the features with uninformative counter-factuals. To derive a representation that is understandable by humans, LIME~\cite{Ribeiro2016} tries to find important contiguous superpixels (a patch of pixels) in a source image towards the output class.~\citet{LundbergNIPS2017} present a unified framework, SHAP, which computes individual feature contributions towards that output prediction. 

As for the backpropagation-based methods, Saliency maps~\cite{Simonyan14deepinside} construct attributions by taking the absolute value of the partial derivative of the target output with respect to the input features.
Gradient$^\ast$Input~\cite{Shrikumar16} was then proposed to improve the sharpness of the attribution maps. The attribution is computed by taking the (signed) partial derivatives of the output with respect to the input and multiplying them with the input itself. Integrated Gradients~\cite{Sundararajan2017}, similarly to Gradient$^\ast$Input, computes the partial derivatives of the output with respect to each input feature. However, while Gradient$^\ast$Input computes a single derivative, evaluated at the provided input $x$, Integrated Gradients computes the average gradient while the input varies along a linear path from a baseline $\hat{x}$ to $x$. Layer-wise Relevance Propagation (LRP) introduced by~\citet{Sebastian40} is used to find relevance scores for individual features in the input data by decomposing the output predictions of the DNN. DeepLIFT~\cite{Shrikumar6}, similar to LRP, assigns each unit $i$ an attribution that represents the relative effect of the unit activated at the original network input $x$ compared to some other reference input.

\subsection{Models of Code}
Machine learning methods have been applied to a variety of programming language tasks such as method name prediction~\cite{fernandes2018structured, alon2018code2seq, Alon2019code2vec, Wang101145, defreez2018poster, defreez2018path}, bug detection~\cite{wang2020learning, allamanis2018learning}, program repair~\cite{chen2019sequencer,Georgios, dinella2019hoppity}, and type inference~\cite{wei2019lambdanet, Allamanis3385997}. Below we survey a few notable representatives. GGNN is the first to learn program embeddings from graph representations of source code. code2vec and code2seq are among the first in predicting method names in large-scale, cross-projecting settings.~\citet{defreez2018path} is the first to use static program traces to learn function embeddings.~\citet{dy} is the first to embed programs with their executions.~\citet{chen2019sequencer} and~\citet{dinella2019hoppity} utilize sequence and graph models for program repair.~\citet{Jianan} and ~\citet{Xujie3} are the noteworthy efforts in inferring loop invariant with deep learning models. CodeBERT~\cite{feng-etal-2020-codebert} learns general-purpose representations that support downstream natural language-programming language applications.

\vspace{11pt}
\section{Conclusion}
\label{sec:conc}


In this paper, we present \method, an explanation method for models of code. Conceptually, we formalize the defining features that models use for predicting the label of input programs. Technically, we develop \textit{Reduce} and \textit{Mutate} and its implementation \tool, which we use to explain code2vec, \motimodel, GGNN, and CodeBERT.
We found that (1) \tool is efficient and effective in finding \whet; 
(2) through retraining, we confirm the validity of \method amid the distribution shift between training and queried programs;
(3) all models use simple syntactic or even lexical properties for prediction; 
(4) some of the most popular attribution methods routinely miss out on the \whet; 
(5) we present an example application of the revealed features: providing explanations for predictions of code models. Through a user study, we have shown the usefulness of our \whet-based explanation method. 

\bibliography{reference}
	
\appendix









\newpage

\section{Definition of Subsequence}
A subsequence of <$a$> is a sequence <$b$> defined by $b_k=a_{n_k}$, where $n_1<n_2<...$ is an increasing sequence of indices~\cite{john1997mathematical}.

For example, if $a_n = 2n - 1$ and $n_k = k^2$, then $b_k = 2k^2 - 1$~\cite{john1997mathematical}.

\begin{table}[H]
	\centering
	\begin{tabular}{c|ccccccccc}
		$n$ & 1 & 2 & 3 & 4 & 5 & 6 & 7 & 8 & 9 \\
		$a_n$ & 1 & 3 & 5 & 7 & 9 & 11 & 13 & 15 & 17 \\
		$k$ & 1 &  &  & 2 &   &  &  &  & 3 \\
		$b_k$ & 1 &  &  & 7 &   &  &  &  & 17 \\
	\end{tabular}
\end{table}

\newpage

\section{The Existence of \Whet}\label{app-app:proof}

\begin{theorem}[Existence of \Whet]\label{app-the:exis}
	Given a prediction $L$ that $M$ makes for an input program $P$, the \whet $\tilde{P}$ that models use to predict the label of $P$ always exists.
\end{theorem}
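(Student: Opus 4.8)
The plan is to reduce the statement to the elementary fact that every finite sequence is a subsequence of itself. First I would make precise what a \whet{} is: regarding the input program $P$ as a finite token sequence $P = (p_1, p_2, \ldots, p_m)$, a \whet{} for the prediction $L$ is a subsequence $\tilde P = (p_{n_1}, p_{n_2}, \ldots, p_{n_\ell})$ of $P$, in the sense of the subsequence definition given above (so $n_1 < n_2 < \cdots < n_\ell$ is an increasing sequence of indices), such that $M(\tilde P) = L$, i.e.\ running $M$ on the retained tokens reproduces the prediction it makes on $P$.

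With this formalization, existence is immediate. Take the index sequence $n_k = k$ for $k = 1, \ldots, m$; this is a strictly increasing sequence of indices, hence admissible by the subsequence definition, and it yields $\tilde P = P$. By hypothesis $M(P) = L$, so $\tilde P = P$ satisfies the defining property of a \whet{}. Hence at least one \whet{} for $(P, L)$ exists, which is exactly the claim.

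If the intended reading is that ``the \whet'' denotes a canonical object --- for instance a shortest faithful subsequence --- rather than merely some faithful subsequence, I would add one further step: the subsequences $\tilde P$ of $P$ satisfying $M(\tilde P) = L$ form a set that is nonempty (by the previous paragraph) and contained in the set of all subsequences of $P$, which is finite because $P$ has only finitely many tokens; this set therefore contains an element of minimal length, and ties can be broken by any fixed total order on index sequences (for instance lexicographic order), so a well-defined $\tilde P$ exists.

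The main obstacle here is not mathematical depth but pinning down the definition precisely: one must commit to the exact sense in which the \whet{} is what ``models use to predict the label of $P$'' --- in particular, that faithfulness means the equality $M(\tilde P) = L$ and that a \whet{} ranges over token-subsequences of $P$. Once that is fixed, the identity subsequence closes the existence argument, and finiteness of $P$ handles the canonical-choice variant.
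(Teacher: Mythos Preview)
Your core strategy---show that $P$ itself (or its body $B_P$) satisfies the defining requirements, then use finiteness of the set of token-subsequences to extract a minimum---is exactly the paper's strategy. The paper phrases it as a contradiction (assume no \whet{} exists, build a strictly descending chain of candidate programs, invoke finiteness to hit a global minimum), whereas you argue directly that a nonempty finite set has a length-minimal element; your version is cleaner and avoids the unnecessary detour through infinite descent.

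There is one genuine gap in your formalization, though. You model a \whet{} as a subsequence $\tilde P$ with $M(\tilde P)=L$, i.e.\ you capture only the \emph{sufficient} requirement of Definition~2.1. The definition also imposes a \emph{necessary} requirement: removing $\tilde P$ from $P$ must change the prediction, $M(P\setminus\tilde P)\neq L$. Your base case $\tilde P=P$ then needs $M(\text{empty program})\neq L$, which you never check. The paper handles this by asserting that the body $B_P$ satisfies \emph{constituent}, \emph{sufficient}, and \emph{necessary} outright; you should make the same assertion (or at least acknowledge that the starting candidate must be shown to satisfy all non-\emph{minimum} requirements, not just faithfulness of the prediction). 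Once you patch the definition to include both conditions and note that the full body satisfies them, your finite-minimization argument goes through unchanged.
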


\begin{proof}
	Assume otherwise, so that the \whet $\tilde{P}$ does not exist for $P$. \par
	Because the body of $P$, $B_P$, satisfies \textit{constituent}, \textit{sufficient}, and \textit{necessary} requirement in Definition~2.1. It has to be the \textit{minimum} requirement that $B_P$ violates, meaning, there exists a set of statements/expressions $P^\prime$ that also satisfies all but the \textit{minimum} requirement, and $\lvert (t_{n}^{P^\prime})_{n\in\mathbb{N}} \rvert < \lvert(t_{n}^{B_P})_{n\in\mathbb{N}}\rvert$. Since $P^\prime$ is not the \whet either, we can infer that $P^\prime$ is also not \textit{minimum}.\par
	Because the domain that contains all sets of statements/expressions whose token sequence is a subsequence of $P$'s is finite, and the size of the candidate programs will monotonically decreases (\ie,
	$\lvert (t_{n}^{P^\prime})_{n\in\mathbb{N}} \rvert < \lvert(t_{n}^{B_P})_{n\in\mathbb{N}}\rvert$,
	$\lvert (t_{n}^{P^{\prime\prime}})_{n\in\mathbb{N}} \rvert < \lvert(t_{n}^{P^{\prime}})_{n\in\mathbb{N}}\rvert$, and so on), there will be a global minimum set of statements/expressions $\tilde{P}$ that satisfies all requirements in Definition~\ref{def:keyfea}, which implies that $\tilde{P}$ is the \whet for $P$. This contradicts the assumption that $\tilde{P}$ does not exist for $P$.
\end{proof}
\vspace{-1pt}

\newpage

\section{ Examples of Delta Debugging}\label{app-app:dd}

The motivating example includes nineteen tokens. At the first step, we split the program into two partitions: the first partition ($\Delta_1$) contains the first nine tokens (\texttt{List, Object, mItems, =, retQueue, if, position, >, mItems}) and the second one ($\Delta_2$) has the last ten tokens (\texttt{size}, \texttt{return}, \texttt{mItems}, \texttt{add, position, genItem}, \texttt{notifyItemInserted}, \- \texttt{position}, \texttt{log}, \texttt{"add item"}). We proceed with $\Delta_2$ since it satisfies the \textit{sufficient} and \textit{necessary} requirement. Then, we split $\Delta_2$ into two partitions, and demonstrate the subsequent steps in Table~\ref{app-tab:dd}. 

\begin{table*}[htbp]
	\captionsetup{skip=1pt}	
	\caption{An example from sequence GNN model without the monotonicity. $\Delta_i$ denotes partitions and $\nabla_i$ is the complement of $\Delta_i$. For simplicity, we use tokens to represent programs that are tested against the \textit{sufficient} and \textit{necessary} requirement at each step. The last column shows the requirements that partitions do not satisfy, but \textit{pass} means the partitions satisfy both requirements.}
	\centering
	\adjustbox{max width=\linewidth}{
		\begin{tabular}{c|c|ccccccccc|c}
			\hline
			\multirow{2}{*}{\tabincell{c}{Step}} & \multirow{2}{*}{\tabincell{c}{Partition}} & \multicolumn{9}{c|}{Tokens} & \multirow{2}{*}{\tabincell{c}{Results}}  \\ 
			 & & size  &  return &  mItems &  add &  position &  genItem &  log &  "add item" &  position \\\hline		
			1 & $\Delta_3$  & $\checkmark$  & $\checkmark$  & $\checkmark$ & $\checkmark$ &  $\checkmark$ & & & & &  Pass  \\\cline{1-12}
			2 & $\Delta_4$ & & & & &  &$\checkmark$ & $\checkmark$  & $\checkmark$ & $\checkmark$   & \textit{Necessary} \\\hline\hline
			3 & $\Delta_5$  = $\nabla_6$ & $\checkmark$  & $\checkmark$ & $\checkmark$&  &   & & & & &  Both  \\\cline{1-12}
			4 & $\Delta_6$  = $\nabla_5$  & & &  & $\checkmark$ & $\checkmark$ &  &   &  &  & Both \\\hline\hline
			5 & $\Delta_7$  & $\checkmark$  &  &  &  &   & & & & &  Both  \\\cline{1-12}	
			6 & $\Delta_8$  &   & $\checkmark$ &  &  &   & & & & &  Both  \\\cline{1-12}	
			7 & $\Delta_9$  &   &  & $\checkmark$ &  &   & & & & &  Both  \\\cline{1-12}
			8 & $\Delta_{10}$  &   &   &   & $\checkmark$ & $\checkmark$  & & & & &  Both  \\\cline{1-12}
			9 & $\nabla_7$  &   &  $\checkmark$  &  $\checkmark$  & $\checkmark$ & $\checkmark$  & & & & &  \textit{Sufficient}  \\\cline{1-12}
			10 & $\nabla_8$  &  $\checkmark$   &  &  $\checkmark$  & $\checkmark$ & $\checkmark$  & & & & &  \textit{Sufficient}  \\\cline{1-12}
			11 & $\nabla_9$  &  $\checkmark$   &  $\checkmark$  &  & $\checkmark$ & $\checkmark$  & & & & &  \textit{Sufficient}  \\\cline{1-12}
			12 & $\nabla_{10}$  &   $\checkmark$  & $\checkmark$  & $\checkmark$ &  &  & & & & &  \textit{Sufficient}  \\\hline\hline
			13 & $\Delta_{11}$  & $\checkmark$  &  &  &  &   & & & & &  Both \\\cline{1-12}	
			14 & $\Delta_{12}$  &   & $\checkmark$ &  &  &   & & & & &  Both  \\\cline{1-12}	
			15 & $\Delta_{13}$  &   &  & $\checkmark$ &  &   & & & & &  Both \\\cline{1-12}
			16 & $\Delta_{14}$  &   &   &   & $\checkmark$ &  & & & & &  Both  \\\cline{1-12}
			17 & $\Delta_{15}$  &   &   &   &  & $\checkmark$  & & & & &  Both \\\cline{1-12}
			18 & $\nabla_{11}$  &   &  $\checkmark$  &  $\checkmark$  & $\checkmark$ & $\checkmark$  & & & & &  \textit{Sufficient}  \\\cline{1-12}
			19 & $\nabla_{12}$  &  $\checkmark$   &  &  $\checkmark$  & $\checkmark$ & $\checkmark$  & & & & &  \textit{Sufficient}  \\\cline{1-12}
			20 & $\nabla_{13}$  &  $\checkmark$   &  $\checkmark$  &  & $\checkmark$ & $\checkmark$  & & & & &  \textit{Sufficient}  \\\cline{1-12}
			21 & $\nabla_{14}$  &   $\checkmark$  & $\checkmark$  & $\checkmark$ &  & $\checkmark$ &  & & & &  \textit{Sufficient}  \\\cline{1-12}
			22 & $\nabla_{15}$  &   $\checkmark$  & $\checkmark$  & $\checkmark$ &   $\checkmark$ &   & & & & &  \textit{Sufficient}  \\\hline
		\end{tabular}
	}
	\label{app-tab:dd}
\end{table*}

\newpage

\section{An example of the subtraction operation in the \textit{Mutate} step} \label{app-app:step6fig}
Figure~\ref{app-fig:subtraction6} gives a detailed illustration for subtracting \texttt{mItems.add();} from the original program.

\begin{figure}[hb]
	\centering
	\setlength\fboxrule{0.4pt}
	
	\begin{subfigure}{0.47\textwidth}
		\centering
		\fbox{\includegraphics[width=\textwidth]{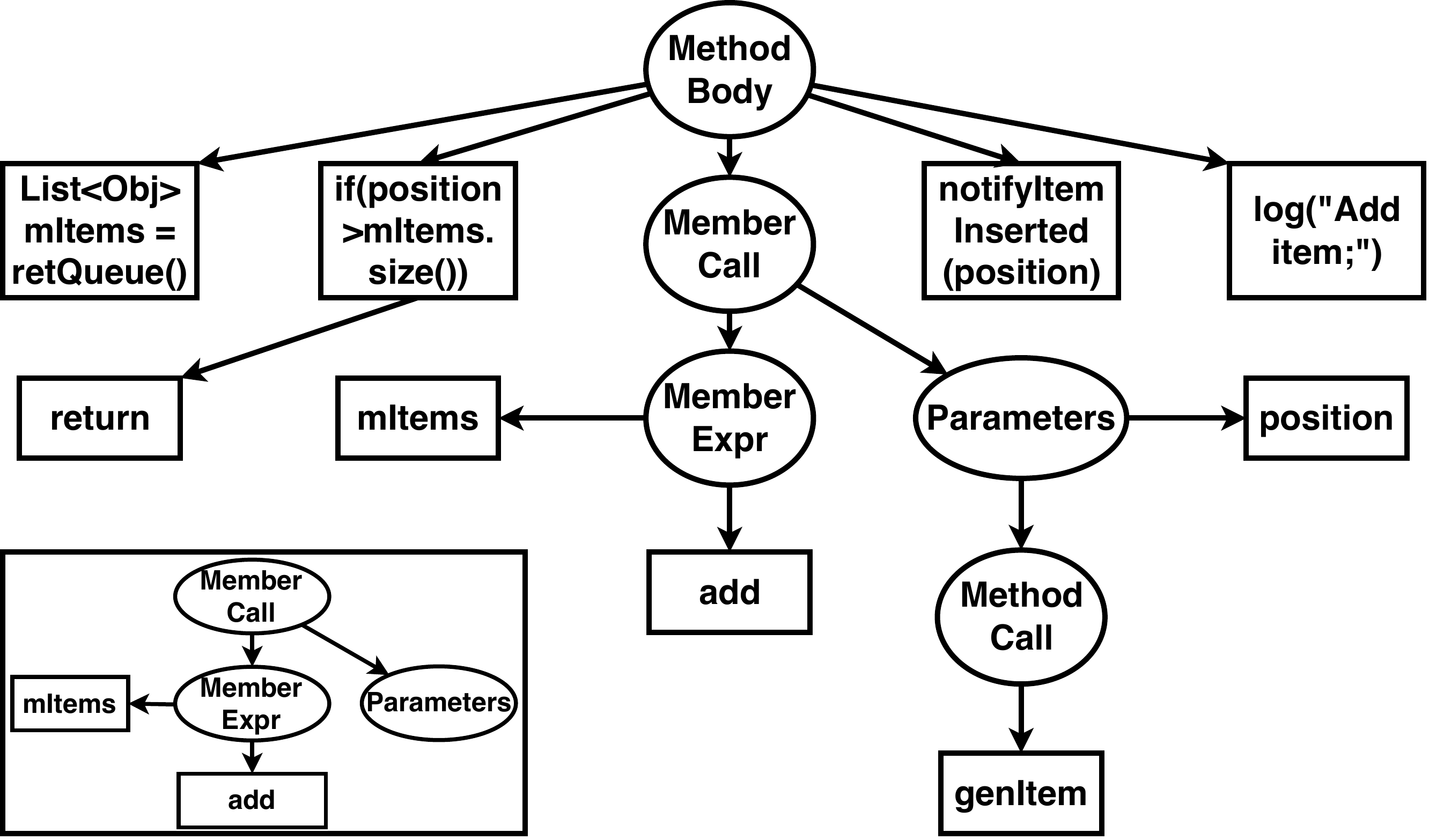}}
		
		\caption{}
		\label{app-fig:subtraction1}
	\end{subfigure}
	\,\,
	\begin{subfigure}{0.47\textwidth}
		\centering
		\fbox{\includegraphics[width=\textwidth]{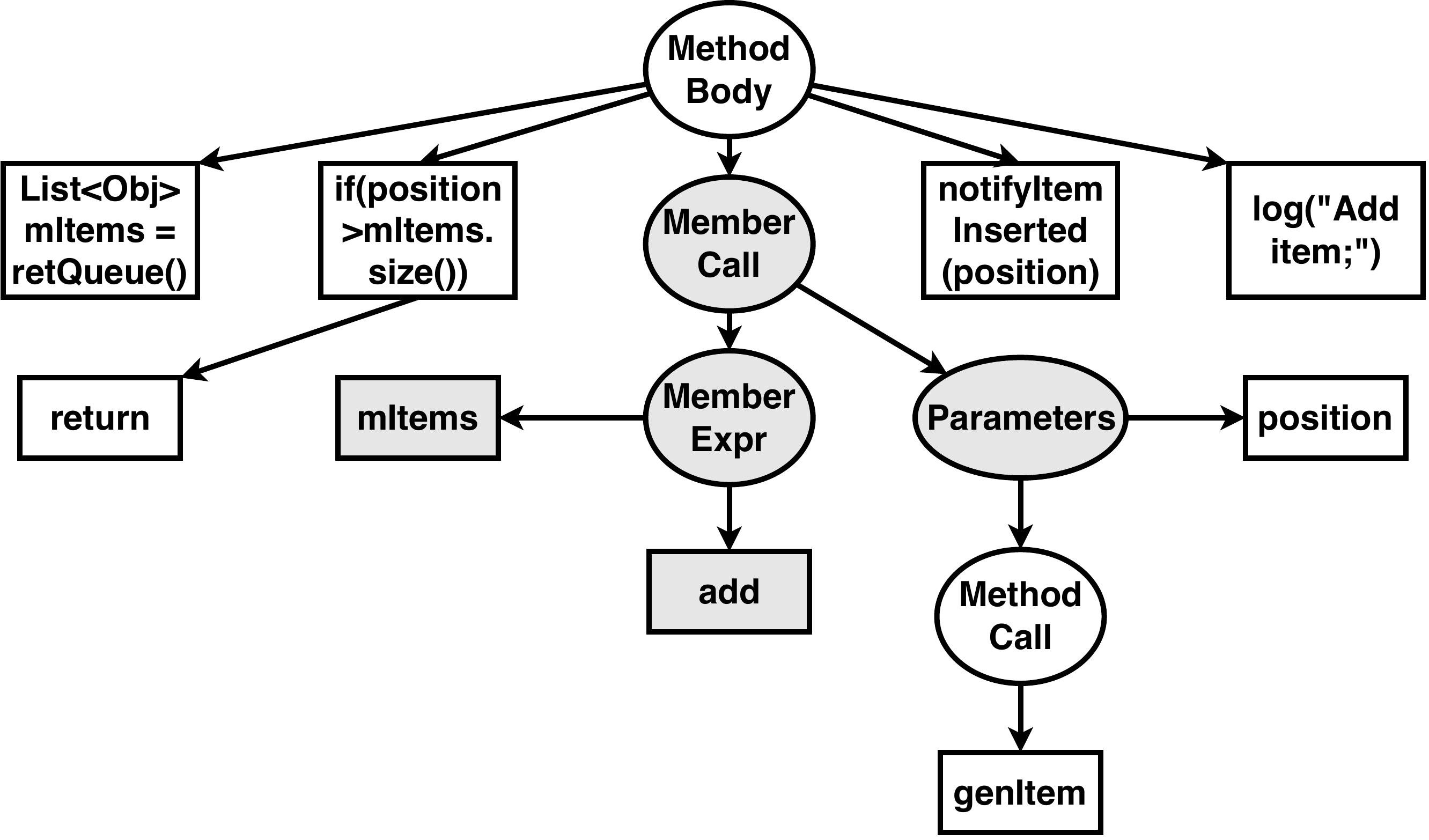}}
		
		\caption{}
		\label{app-fig:subtraction2}
	\end{subfigure}
	
	\begin{subfigure}{0.47\textwidth}
		\centering
		\fbox{\includegraphics[width=\textwidth]{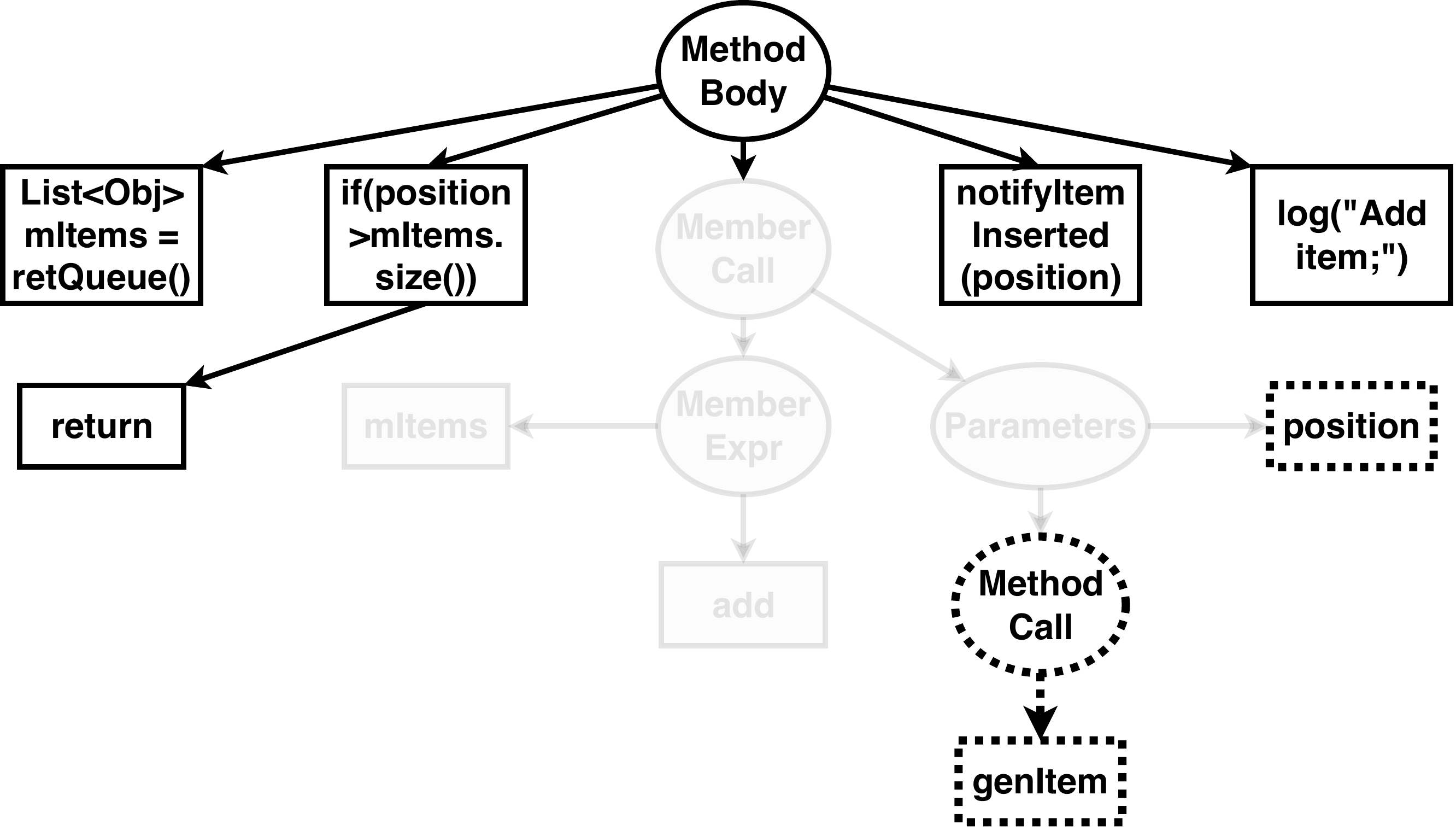}}
		
		\caption{}
		\label{app-fig:subtraction3}
	\end{subfigure}
	\,\,
	\begin{subfigure}{0.47\textwidth}
		\centering
		\fbox{\includegraphics[width=\textwidth]{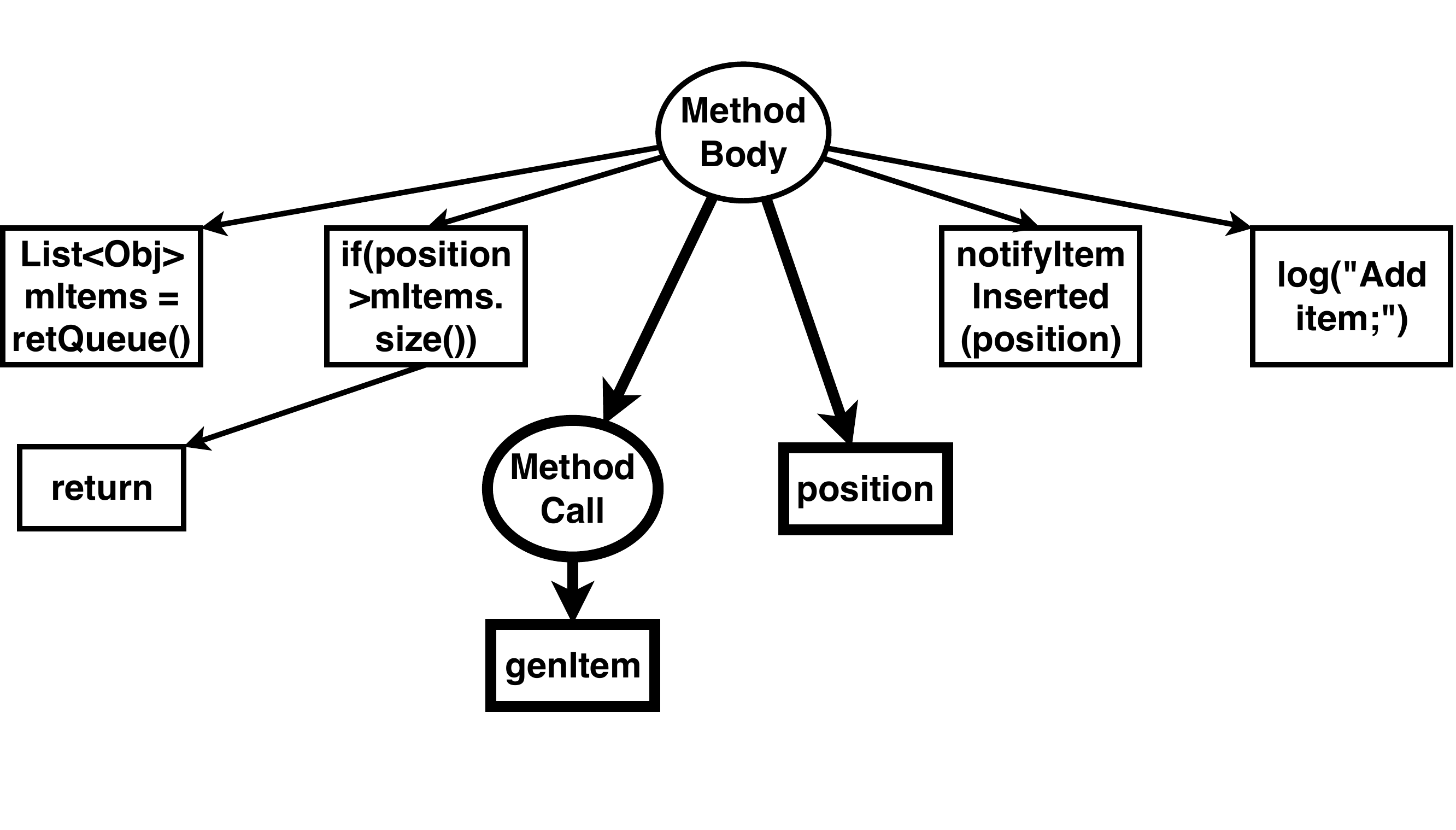}}
		
		\caption{}
		\label{app-fig:subtraction4}
	\end{subfigure} 
	
	\caption{A illustration of the four-step process of subtracting \texttt{mItems.add();} from our example program. (1) Figure~(\subref{app-fig:subtraction1}) presents the AST of the running example, and the AST of \texttt{mItems.add();} in the bottom left corner. For brevity, the ASTs are simplified. (2) Figure~(\subref{app-fig:subtraction2}) highlights the overlapping nodes between the two ASTs. (3) Figure~(\subref{fig:subtraction3}) emphasizes the resultant AST after the overlapping nodes are removed, as a result, \texttt{position} and \texttt{getItem()} become dangling nodes. (4) Figure~(\subref{app-fig:subtraction4}) connects \texttt{position} and \texttt{getItem()} to the body of the method.}
	\label{app-fig:subtraction6}
\end{figure}

\newpage
\section{The functionality of \texttt{DeleteNode} and \texttt{MutateNode}} \label{app-app:alg}

Algorithm~\ref{app-alg:keyfeasimp} gives a detailed illustration for \texttt{DeleteNode} and \texttt{MutateNode} functions. 
In  the \texttt{DeleteNode} function, we first try removing the node, which comes from its first parameter, then check whether the resultant program satisfies both the \textit{sufficient} and \textit{necessary} requirement (Line~\ref{app-line:checkdel}). If it satisfies, we update the current program to the new one whose node is deleted (Line~\ref{app-line:update1}). 
Similarly, in the  \texttt{MutateNode} function, we will mutate the node into one with an out-of-vocabulary value (Line~\ref{app-line:wrap2}) and update the program if the resultant program satisfies the two requirements (Lines~\ref{app-line:checkmut} and~\ref{app-line:checkmut2}). 
Both \texttt{DeleteNode} and \texttt{MutateNode} functions invoke   the \texttt{VerifyWheat} function, which constructs two programs for checking against the \textit{sufficient} and \textit{necessary} requirements, respectively. The function returns \textit{True} only when the resultant program satisfy both requirements, otherwise it returns \textit{False}.

\begin{algorithm}[!h]
	\algnewcommand\Pd{prediction\xspace}
	\algnewcommand\Ml{model\xspace}
	\algnewcommand\OMT{program\xspace}
	\algnewcommand\K{i\xspace}	
	\algnewcommand\Mn{met.name\xspace}		
	\algnewcommand\Me{met\xspace}
	\algnewcommand\Mt{met\_ast\xspace}
	\algnewcommand\Md{fragments\xspace}
	\algnewcommand\Ma{min\_ast\xspace}
	\algnewcommand\Fe{features\xspace}
	\algnewcommand\Ne{node\xspace}
	\algnewcommand\Tn{target\_node\xspace}
	\algnewcommand\Fg{flag\xspace}
	\algnewcommand\Nt{node.children\xspace}
	\algnewcommand\Nz{node.children.size\xspace}	
	\algnewcommand\Ni{node.children[i]\xspace}	
	\algnewcommand\Na{node.children[node.value[1]]\xspace}	
	\algnewcommand\Nv{node.value\xspace}
	\algnewcommand\No{node.value[0]\xspace}	
	\algnewcommand\Nn{node.value[1]\xspace}	
	\algnewcommand\Ve{value\xspace}
	\algnewcommand\Pl{oov\xspace}		
	\algnewcommand\Rt{root\xspace}
	\algnewcommand\Ry{root.copy\xspace}	
	\algnewcommand\Ss{s\xspace}
	\algnewcommand\Ch{child\xspace}
	\algnewcommand\Cs{replace\xspace}
	\algnewcommand\Rv{replace.value\xspace}
	\algnewcommand\Cv{child.value\xspace}
	\algnewcommand\Cz{child.value[0]\xspace}	
	\algnewcommand\Co{count\xspace}
	\algnewcommand\Ps{pos\xspace}	
	\algnewcommand\Vo{oov\xspace}	
	\algnewcommand\Mr{min\_root\xspace}	
	\algnewcommand\NB{node\_bck\xspace}	
	\algnewcommand\FMN{MutateNode\xspace}	
	\algnewcommand\FDN{DeleteNode\xspace}	
	\algnewcommand\FIN{InheritNode\xspace}	
	\algnewcommand\NewV{new\_value\xspace}	
	\algnewcommand\Ncv{node.children[0].value\xspace}	
	\algnewcommand\IV{IsValid\xspace}	
	\algnewcommand\CR{VerifyWheat\xspace}	
	\algblockdefx[Foreach]{Foreach}{EndForeach}[1]{\textbf{foreach} #1 \textbf{do}}{\textbf{end foreach}}
	
	\caption{Delete and mutate AST nodes. }
	\label{app-alg:keyfeasimp}
	\begin{algorithmic}[1]
		\item[]
		
				\Function{\FDN}{$\Ne, \Rt, \OMT, \Ml$} \label{app-line:reduce}
				\State new\_root\_del $\gets$ \Rt.\textit{Delete}\;\!(\Ne) \label{app-line:wrap1} \Comment{remove {node} }
				\If{\Call{\CR}{new\_root\_del , \Rt, \OMT, \Ml}\label{app-line:checkdel}}  
				\State  \textit{Update}\;\!(\Rt, new\_root\_del)\label{app-line:update1} \Comment{update to the new program}
					\EndIf
				\EndFunction

		\item[]				
				\Function{\FMN}{$\Ne, \Rt, \OMT, \Ml$} \label{app-line:mutate}
              \State  new\_root\_mut $\gets$  \Rt.\textit{Replace}\;\!(\Ne, \Pl)\label{app-line:wrap2} \Comment{replace {node} with oov}
				\If{\Call{\CR}{new\_root\_mut , \Rt, \OMT, \Ml}\label{app-line:checkmut}}
				\State  \textit{Update}\;\!(\Rt, new\_root\_mut)\label{app-line:checkmut2} 
				\EndIf  \label{app-alg:mut:count1end}
				
				\EndFunction
		
				\item[]
				
				\Function{\CR}{$\Ne, \Rt, \OMT, \Ml$} \label{app-line:checkrequirements}
		
				\State program\_suff $\gets$ \textit{Serialize}\;\!(\Ne)\Comment{checking against the \textit{sufficient} requirement}
			
				\If{\Ml.\textit{Predict}\;\!(program\_suff) != \Ml.\textit{Predict}\;\!(\OMT)}
				\State	\Return{False}		\Comment{does not satisfy the \textit{sufficient} requirement}
				\EndIf 
				
				\State program\_necc $\gets$ \textit{Subtract}\;\!(\Rt, \Ne)\Comment{checking against the \textit{necessary} requirement}
				
				\If{\Ml.\textit{Predict}\;\!(program\_necc) ==\Ml.\textit{Predict}\;\!(\OMT)}
				\State	\Return{False}		\Comment{does not satisfy the \textit{necessary} requirement}
				\EndIf 
				\State	\Return{True}		
		
				\EndFunction
	\end{algorithmic}
\end{algorithm}

\newpage

%
%
%
%
%

\section{Performance of Re-implemented Models }
We have re-implemented code2vec, Sequence GNN, GGNN and
CodeBERT.
Table~\ref{app-tab:com1},~\ref{app-tab:com2}, and~\ref{app-tab:com3} show the performance of all re-implemented models is either comparable or superior to the originals. 

\begin{table}[hb]
		\begin{minipage}{1\textwidth}
	\captionsetup{skip=1pt}	
\caption{Compare reimplementations (\textbf{bolded}) to originals for code2vec and Seq-GNN. }
\centering
		\adjustbox{max width=\linewidth}{

			\begin{tabular}{c|ccc|ccc|ccc}
				\hline
				\multirow{2}{*}{\tabincell{c}{Models}} & \multicolumn{3}{c|}{Java-small} & \multicolumn{3}{c|}{Java-med} & \multicolumn{3}{c}{Java-large}\\
			\cline{2-10}
			& Precision & Recall & F1 & Precision & Recall & F1 & Precision & Recall & F1  \\\hline
			code2vec    & 18.51 & 18.74 & 18.62 & 38.12 & 28.31 &32.49 & 48.15 & 38.40 & 42.73    \\\hline		
			\textbf{code2vec}& \textbf{19.23} & \textbf{17.72} & \textbf{18.44} & \textbf{40.32} & \textbf{28.89} &\textbf{33.66} & \textbf{48.90} & \textbf{37.26} & \textbf{42.29} \\\hline\hline

			Seq-GNN    & \textemdash & \textemdash & 51.4 & \textemdash & \textemdash &\textemdash & \textemdash & \textemdash & \textemdash    \\\hline
			\textbf{Seq-GNN}& \textbf{49.94} & \textbf{47.35} & \textbf{48.61} & \textbf{58.46} & \textbf{45.73} &\textbf{51.32} & \textbf{61.82} & \textbf{50.32} & \textbf{55.48} \\\hline\hline
			\end{tabular}
		}
       \label{app-tab:com1}
		\end{minipage}%
	
		\begin{minipage}{0.48\textwidth}
			\raggedleft
			
			\caption{Compare reimplementations (\textbf{bolded}) to originals for GGNN.}
\begin{tabular}{>{\centering\arraybackslash}p{2cm} | >{\centering\arraybackslash}p{4cm} }
     			\hline
     			 \multirow{2}{*}{Models}  &  C\# Datasets \\\cline{2-2}		
				&  Accuracy  \\\hline
			GGNN &  78.2     \\\hline
			\textbf{GGNN}& \textbf{78.0}    \\\hline
			\end{tabular} 
       \label{app-tab:com2}
   		\end{minipage}
	\,\,
		\begin{minipage}{.48\textwidth}
			\raggedright
	\caption{Compare reimplementations (\textbf{bolded}) to originals for CodeBERT.}
\begin{tabular}{>{\centering\arraybackslash}p{2cm} | >{\centering\arraybackslash}p{4cm} }
		\hline
		\multirow{2}{*}{Models}  & CodeSearchNet \\\cline{2-2}		
		& Smoothed BLEU score  \\\hline
		CodeBERT & 17.65  \\\hline
\textbf{CodeBERT} & \textbf{17.66} \\\hline
	\end{tabular} 
       \label{app-tab:com3}
   \end{minipage}
\end{table}

\newpage

\section{Addition Edges Used in Fernandes \etal} 
\label{app-app:edges}

Below we give the list of edges~\citet{fernandes2018structured} incorporate into ASTs.

\begin{itemize}
    \item \textbf{NextToken} connects each terminal node (syntax token) to its successor.
    \item \textbf{LastRead} connects a terminal node of a variable to all elements of the set of terminal nodes at which the variable could have been read last.
     \item \textbf{LastWrite} connects a terminal node of a variable to all elements of the set of syntax tokens at which the variable was could have been last written to.
     \item \textbf{ComputedFrom} connects a terminal node of a variable $v$ to all variable tokens occurring in $expr$ when $expr$ is assigned to $v$. 
     \item \textbf{LastLexicalUse} chains all uses of the same variable.
     \item \textbf{ReturnsTo} connects \texttt{return} tokens to the method declaration.
     \item \textbf{FormalArgName} connects arguments in method calls to the formal parameters that they are matched to.
     \item \textbf{GuardedBy} connects every token corresponding to a variable (in the true branch of a \texttt{if} statement) to the enclosing guard expression that uses the variable.
     \item \textbf{GuardedByNegation} connects every token corresponding to a variable (in the false branch of a \texttt{if} statement) to the enclosing guard expression that uses the variable.    
\end{itemize}

We exclude \textbf{NextToken} and \textbf{ReturnsTo} since they do not represent any semantic properties.

\newpage

\section{Performance of the re-trained models on the augmented datasets.}
We have re-trained code2vec, Sequence GNN, GGNN and
CodeBERT on the manually labeled datasets.
Table~\ref{app-tab:aug1}, \ref{app-tab:aug2}, and \ref{app-tab:aug3} show the performance of all re-trained models is negligibly higher than  original models.

\begin{table}[hb]
	\begin{minipage}{1\textwidth}
		\captionsetup{skip=1pt}	
		\caption{Compare code2vec and Seq-GNN before and after the retraining (\textbf{bolded} numbers denotes the results of retrained models). }
		\centering
		\adjustbox{max width=\linewidth}{
			
			\begin{tabular}{c|ccc|ccc}
				\hline
				\multirow{2}{*}{\tabincell{c}{Models}} & \multicolumn{3}{c|}{Java-small} & \multicolumn{3}{c}{Java-med} \\
				\cline{2-7}
				& Precision & Recall & F1 & Precision & Recall & F1  \\\hline
				code2vec& 19.23 & 17.72 & 18.44 & 40.32 & 28.89 &33.66  \\\hline
				\textbf{code2vec}& \textbf{19.33} & \textbf{18.01} & \textbf{18.74} & \textbf{40.38} & \textbf{28.97} &\textbf{33.96}  \\\hline\hline
				
				Seq-GNN& 49.94 & 47.35 & 48.61 & 58.46 & 45.73 &51.32  \\\hline
				\textbf{Seq-GNN}& \textbf{50.12} & \textbf{48.39} & \textbf{48.62} & \textbf{58.51} & \textbf{46.09} &\textbf{52.00}  \\\hline\hline
			\end{tabular}
		}
		\label{app-tab:aug1}
	\end{minipage}%
	
	\begin{minipage}{0.48\textwidth}
		\raggedleft
		
		\caption{Compare GGNN before and after the retraining (\textbf{bolded} numbers denotes the results of the retrained model).}
		\begin{tabular}{>{\centering\arraybackslash}p{2cm} | >{\centering\arraybackslash}p{4cm} }
			\hline
			\multirow{2}{*}{Models}  &  C\# Datasets \\\cline{2-2}		
			&  Accuracy  \\\hline
			GGNN &  78.0     \\\hline
			\textbf{GGNN}& \textbf{78.8}    \\\hline
		\end{tabular} 
		\label{app-tab:aug2}
	\end{minipage}
	\,\,
	\begin{minipage}{.48\textwidth}
		\raggedright
		\caption{Compare CodeBERT before and after the retraining (\textbf{bolded} numbers denotes the results of the retrained model).}
		\begin{tabular}{>{\centering\arraybackslash}p{2cm} | >{\centering\arraybackslash}p{4cm} }
			\hline
			\multirow{2}{*}{Models}  & CodeSearchNet \\\cline{2-2}		
			& Smoothed BLEU score  \\\hline
			CodeBERT & 17.66  \\\hline
			\textbf{CodeBERT} & \textbf{17.91} \\\hline
		\end{tabular} 
		\label{app-tab:aug3}
	\end{minipage}
\end{table}

\end{document}